\documentclass{article}

\usepackage[preprint]{neurips_2026}

\usepackage[utf8]{inputenc}
\usepackage[T1]{fontenc}
\usepackage{hyperref}
\usepackage{url}
\usepackage{booktabs}
\usepackage{amsfonts}
\usepackage{amsmath}
\usepackage{amssymb}
\usepackage{amsthm}
\usepackage{microtype}
\usepackage{xcolor}
\usepackage{algorithm}
\usepackage{algorithmic}
\usepackage{graphicx}
\usepackage{multirow}
\usepackage{float}   % [H] figure placement in appendix

\newcommand{\V}{\mathbf{V}}
\newcommand{\C}{\mathbf{C}}

\newcommand{\R}{\mathbb{R}}
\newcommand{\E}{\mathbb{E}}
\newcommand{\tr}{\operatorname{tr}}
\newcommand{\EY}{\mathcal{L}_{\mathrm{EY}}}
\newcommand{\sgn}{\operatorname{sgn}}

\newtheorem{proposition}{Proposition}

\title{TreeCCA: Canonical Correlation Analysis via Gradient-Boosted Trees}

\author{}

\author{%
  James Chapman \\
  \texttt{chapmajw@gmail.com} \\
}

\begin{document}

\maketitle

\begin{abstract}
Gradient-boosted trees dominate tabular machine learning, yet canonical correlation
analysis has always relied on linear or neural encoders.  We propose \textbf{TreeCCA},
the first method to train gradient-boosted tree ensembles end-to-end as CCA encoders,
inheriting their plug-and-play reliability: no architecture design, familiar
hyperparameters, and strong performance with defaults.  The technical enabler is the
Eckart-Young (EY) loss, which supplies closed-form per-sample gradients that slot
directly into any standard GBT library (XGBoost, LightGBM) as a custom objective.

TreeCCA is the first CCA method to combine nonlinear accuracy with native
interpretability: every tree split selects one feature, so gain importances reveal which
inputs drive cross-view correlation at no extra cost.  We demonstrate these properties on synthetic benchmarks,
where TreeCCA matches or exceeds Deep CCA (2.61 vs.\ 2.43 on Signed Power; 2.93
vs.\ 2.89 on Hermite), and on a sparse benchmark with zero linear cross-view covariance,
where TreeCCA recovers the true support with $\text{Precision@}S = 1.00$ at $p=50$
while PMD finds no signal.  On the UCI HAR sensor-fusion benchmark, TreeCCA achieves
comparable accuracy to Deep CCA at $5\times$ lower cost, while XGBoost gain importances
directly validate a physics-motivated hypothesis about the data --- an interpretation
not readily available with neural encoders.  Across five popular tabular multi-view
datasets, TreeMCCA consistently matches or exceeds linear CCA in both nonlinear
correlation extraction and downstream classification accuracy.
\end{abstract}

\section{Introduction}

Canonical Correlation Analysis (CCA)~\citep{hotelling1936} finds projections of
two matched views that maximise cross-view correlations.  Multi-omics is one domain
where CCA has found significant application: integrating genomics, transcriptomics, and
proteomics has revealed gene-expression-to-genotype associations and brain--behaviour
relationships that no single-view analysis could uncover~\citep{parkhomenko2007genomic, smith2015positive}.
Gradient-boosted trees dominate tabular benchmarks~\citep{grinsztajn2022trees, shwartz2022tabular},
yet no CCA method has ever used them as encoders.

We propose \textbf{TreeCCA}: the first method to train gradient-boosted tree ensembles
as CCA encoders.  The enabling observation is that the Eckart-Young (EY) loss~\citep{chapman2023}
provides closed-form per-sample gradients and a no-spurious-local-minima guarantee,
properties that make CCA tractable via any GBT library's custom objective API with no
modification to the library itself.  Two practical implementation insights complete the
picture: an incremental embedding cache and principled unit Hessians
(\S\ref{sec:method}).

\paragraph{Contributions.}
\begin{itemize}
  \item \textbf{First GBT-based CCA.}  The EY loss provides closed-form per-sample
        gradients that plug directly into any GBT library's custom objective API with no
        modification to the library, and its global optimization landscape is
        well-characterized (\S\ref{sec:background}).
  \item \textbf{Efficient training.}  An incremental embedding cache ($50\text{--}100\times$
        speedup at $R=500$) and principled unit Hessians ensure stability and speed
        (\S\ref{sec:method}).
  \item \textbf{TreeCCA outperforms Deep CCA on synthetic benchmarks} (Signed Power:
        $2.61 \pm 0.04$ vs.\ $2.43 \pm 0.04$; Hermite: $2.93 \pm 0.02$ vs.\
        $2.89 \pm 0.02$, 5 seeds) (\S\ref{sec:sup_bench}).
  \item \textbf{Generalisation at scale.}  On Split MNIST ($N=54\text{k}$, $K=50$),
        TreeCCA outperforms Deep CCA on validation TCC (total correlation captured; 30.4
        vs.\ 25.5); DCCA overfits (train/val $= 1.95\times$), TreeCCA does not ($1.04\times$)
        (\S\ref{sec:split_mnist}).
  \item \textbf{Nonlinear feature recovery.}  On a sparse benchmark with zero-linear-covariance
        signal, TreeCCA achieves perfect precision while PMD stays at random baseline.
        On UCI HAR, the learned feature importances are consistent with a physics-motivated
        hypothesis about magnitude-driven cross-sensor signal (\S\ref{sec:sparse_recovery}).
  \item \textbf{Extensions.}  TreeMCCA generalises to $M>2$ views and is evaluated on
        five real-world multi-view benchmarks (\S\ref{sec:realworld_mat}; synthetic
        four-view validation in Appendix~\ref{app:multiview}).  A siamese single-encoder
        variant (TreeCCA-SSL) shows that tree encoders can exploit cross-view signals
        structurally invisible to any affine encoder, opening an avenue for tree-based
        tabular SSL (Appendix~\ref{app:ssl}).
\end{itemize}

\section{Related Work}
\label{sec:related}

\paragraph{Linear and kernel CCA.}
Classical CCA~\citep{hotelling1936, hardoon2004} finds optimal linear projections in
closed form; \citet{uurtio2017tutorial} and \citet{yang2019survey}
provide comprehensive reviews of CCA methods, regularisation variants, and applications.
Kernel CCA (KCCA) applies CCA in reproducing kernel Hilbert
spaces~\citep{hardoon2004}, achieving $O(N^2)$ memory complexity and requiring
regularisation for generalisation.  A unified implementation of many CCA variants is available as an open-source
library~\citep{chapman2021cca}.

\paragraph{Deep CCA.}
\citet{andrew2013} introduced Deep CCA, training neural encoders end-to-end on a CCA
objective.  \citet{wang2015stochastic} extended it to stochastic mini-batch objectives
and \citet{livne2021dvcca} to variational formulations.
The Eckart-Young (EY) loss was developed as an unconstrained CCA
objective and demonstrated with neural encoders at scales up to 582K features (UK Biobank genetics).
Deep CCA is the primary nonlinear CCA baseline, but neural networks frequently underperform
gradient-boosted trees on tabular benchmarks~\citep{grinsztajn2022trees, shwartz2022tabular}
due to threshold structure, feature interactions, and moderate sample sizes --- precisely
the regime where multi-omics and clinical CCA applications live.

\paragraph{Sparse CCA.}
PMD~\citep{witten2009pmd} is the dominant practical approach: an L1 penalty on loading
vectors identifies the sparse feature subset driving cross-view correlation.
Considerable research effort has gone into the underlying combinatorial problem ---
exact sparse CCA is NP-hard~\citep{li2024sparse} --- but no sparsity penalty overcomes
the linear objective's blindness to signals with zero linear cross-view covariance.

TreeCCA provides structural sparsity without any penalty: every split selects one
feature, so gain concentrates on the discriminative subset.  Feature diagnostics
(gain, cover, split frequency) are richer than a scalar loading, and nonlinear
cross-view signals are handled naturally.

\paragraph{Gradient-boosted trees on tabular data.}
Extensive benchmarks~\citep{grinsztajn2022trees, shwartz2022tabular} show that XGBoost
and LightGBM~\citep{chen2016, ke2017lightgbm} outperform neural networks on most tabular
datasets.  Despite this dominance, no prior work uses GBTs as CCA encoders.

Despite the superficially similar name, Canonical Correlation Forests~\citep{rainforth2015}
are entirely unrelated: they are a supervised classification method that uses CCA
\emph{inside} each tree node as a split criterion, with no connection to multi-view CCA
or unsupervised representation learning.

\paragraph{Self-supervised learning for tabular data.}
SCARF~\citep{bahri2022scarf}, SubTab~\citep{ucar2021subtab}, and
SAINT~\citep{somepalli2021saint} are neural tabular SSL methods.
Joint-embedding objectives such as VICReg~\citep{bardes2022vicreg} and Barlow
Twins~\citep{zbontar2021barlow} have driven progress in vision SSL but use neural
encoders throughout.  TreeCCA opens the possibility of joint-embedding SSL with tree
encoders, which we hypothesise may be better suited to tabular feature structure.
The primary barrier remains augmentation design: no canonical structure-preserving
augmentation exists for general tabular data, and we leave this to future work.

\section{Preliminaries}
\label{sec:background}

\subsection{The Eckart-Young (EY) Loss}

CCA seeks matrices $W_1 \in \R^{p_1 \times K}$ and $W_2 \in \R^{p_2 \times K}$
such that the columns of $X_1 W_1$ and $X_2 W_2$ are maximally correlated across views
and mutually decorrelated within each view.  Classical CCA solves this via a generalised
eigenvalue problem requiring inversion of within-view covariance matrices, which is
unstable near singularity and architecture-specific (gradients pass through the encoder).

The Eckart-Young loss~\citep{chapman2023} reformulates CCA as unconstrained minimisation
over the embedding matrices $Z_1, Z_2$ directly, bypassing the eigenvalue problem
entirely.  For centred embeddings $Z_{vc} = Z_v - \bar{Z}_v \in \R^{N \times K}$, define
the symmetrised cross-covariance and within-view covariance:
\begin{align}
  \C &= \frac{1}{N-1}\bigl(Z_{1c}^\top Z_{2c} + Z_{2c}^\top Z_{1c}\bigr), \\[4pt]
  \V &= \frac{1}{N-1}\bigl(Z_{1c}^\top Z_{1c} + Z_{2c}^\top Z_{2c}\bigr).
  \label{eq:CV}
\end{align}
The loss and its gradient per sample $i$ of view 1 are:
\begin{align}
  \EY(Z_1, Z_2) &= \underbrace{-2\,\tr(\C)}_{\text{cross-view attraction}}
                  + \underbrace{\|\V\|_F^2}_{\text{covariance penalty}},
                  \label{eq:ey} \\[4pt]
  \frac{\partial \EY}{\partial Z_{1,i}} &= \frac{4}{N-1}
  \bigl(-\,Z_{2c,i} + \V\, Z_{1c,i}\bigr),
  \label{eq:grad}
\end{align}
and symmetrically for $Z_{2,i}$.  The minimum value is $-\sum_{k=1}^K \rho_k^2$, the negative sum of squared canonical correlations, and every
local minimum is a global minimum --- so optimisation provably recovers the CCA solution.

\textbf{The enabling property for TreeCCA} is architecture-agnosticity: the gradient
in Eq.~\eqref{eq:grad} depends only on the empirical embeddings $Z_1, Z_2$, not on
how they were produced.  Any encoder --- including a GBT ensemble --- can therefore be
trained end-to-end simply by supplying these per-sample gradients as a custom objective,
with the guarantee that doing so converges to embeddings that maximise the canonical
correlations.  Why alternating regression does not achieve the same, and why the EY
formulation is necessary, is discussed in Appendix~\ref{app:alt_reg}.

\subsection{GBT Training with Custom Objectives}

XGBoost and LightGBM~\citep{chen2016, ke2017lightgbm} use histogram-based split
finding and accept custom objectives that supply explicit gradient and Hessian arrays.
Each TreeCCA round fits $2K$ scalar trees; each tree costs $O(Np)$ via the histogram
approximation, giving total complexity $O(R \cdot K \cdot N \cdot p)$ for $R$ boosting
rounds, $N$ training samples, $K$ embedding dimensions, and $p$ input features per view.

\section{TreeCCA}
\label{sec:method}

\subsection{Two-Encoder TreeCCA}

Two boosters $\mathcal{B}_1: X_1 \to Z_1 \in \R^{N \times K}$ and
$\mathcal{B}_2: X_2 \to Z_2 \in \R^{N \times K}$ are trained end-to-end.
Each embedding dimension $k$ is a separate scalar booster trained on the $k$-th column
of the normalised gradient $\tilde{G}_v$ (Eq.~\ref{eq:grad_impl}), so any GBT library
with a custom scalar objective suffices.  At each round, both gradient matrices are
computed from the current $(Z_1, Z_2)$ before either booster is updated.
Algorithm~\ref{alg:treecca} gives the full procedure.  The same design works
with LightGBM; an alternative joint design using XGBoost's multi-output API is
compared in Appendix~\ref{app:design}.

\begin{algorithm}[t]
\caption{TreeCCA}
\label{alg:treecca}
\begin{algorithmic}[1]
\REQUIRE $X_1 \in \R^{N\times p_1}$,\; $X_2 \in \R^{N\times p_2}$;\;
         embedding dim $K$;\; rounds $T$;\; learning rate $\eta$
\ENSURE Boosters $\{b_{v,k}\}$;\; embed new data as
        $\hat{Z}_v = [b_{v,1}(X_v),\ldots,b_{v,K}(X_v)]$
\STATE Initialise $2K$ scalar boosters $b_{v,k}$, base margin $\leftarrow$ $k$-th PCA direction of $X_v$
\STATE $Z_v \leftarrow \mathrm{PCA}_K(X_v)$ \hfill\COMMENT{initial embeddings, $v\in\{1,2\}$}
\FOR{$t = 1, \ldots, T$}
  \STATE $\tilde{G}_v \leftarrow \tfrac{4}{N-1}\!\bigl(-Z_{\bar{v}c} + \mathbf{V}Z_{vc}\bigr)$,
         normalised \hfill\COMMENT{EY gradient from current $(Z_1,Z_2)$, both views simultaneously (Jacobi; Eq.~\ref{eq:grad_impl})}
  \STATE $b_{v,k} \mathrel{+}= \mathrm{tree}(X_v,\;\tilde{G}_{v,k},\;H{=}1)$
         \hfill\COMMENT{fit and add one new tree per booster; $v\in\{1,2\}$, $k\in\{1,\ldots,K\}$}
  \STATE $Z_{v,k} \leftarrow Z_{v,k} + \eta\;\Delta b_{v,k}^{(t)}(X_v)$
         \hfill\COMMENT{update cached embedding with new tree only; $v\in\{1,2\}$, $k\in\{1,\ldots,K\}$}
\ENDFOR
\RETURN $\{b_{1,k}\}_{k=1}^K$,\; $\{b_{2,k}\}_{k=1}^K$
\end{algorithmic}
\end{algorithm}

\paragraph{Embedding cache.}
GBT predictions are additive: $Z^{(t+1)} = Z^{(t)} + \eta\,\mathrm{tree}_t(X)$.
Re-evaluating all $t$ trees each round costs $O(R^2NK)$ total, which dominates at
$R > 200$.  We maintain a cached $Z$ and increment it by evaluating only the new tree,
reducing cost to $O(RNK)$ --- a $50\text{--}100\times$ speedup at $R=500$.

\paragraph{Gradient and Hessian.}
GBT libraries require two arrays per round: a per-sample gradient $g_i$ and Hessian
$h_i$, used to compute optimal leaf values via
$w^* = -\sum g_i\,/\,(\sum h_i + \lambda)$ ($\lambda$: L2 leaf regularisation).

\textit{Unit Hessians.}
We set $h_i = 1$ for all samples, giving denominator $n_\ell + \lambda$ and leaf
value equal to the \emph{average} gradient over the leaf --- the standard
gradient-boosted regression update (Hessian of $\tfrac{1}{2}(y-\hat{y})^2$ is also
1).  The true EY Hessian $\frac{4}{N-1}V_{kk}$ is near-zero at PCA initialisation
($V_{kk} \approx 0$), which collapses the denominator to $\lambda \approx 1$
regardless of leaf size, giving a gradient \emph{sum} over $n_\ell \approx 90$
samples rather than an average, causing immediate divergence.

\textit{Gradient normalisation.}
The raw EY gradient bracket $(-Z_{\bar{v}c} + \mathbf{V}Z_{vc})$ varies in magnitude
across components and scales as $O(1/\sqrt{N})$, so without normalisation the
effective learning rate differs per component and per dataset size.
Normalising both views jointly to a fixed target standard deviation:
\begin{equation}
  \tilde{G}_v =
  \frac{-Z_{\bar{v}c} + \mathbf{V}Z_{vc}}
       {\max\!\bigl(\sigma(G_1),\,\sigma(G_2),\,\epsilon\bigr)}
  \cdot \sigma_{\mathrm{target}},\quad \sigma_{\mathrm{target}} = 0.1,
  \label{eq:grad_impl}
\end{equation}
makes leaf updates $O(\eta \cdot \sigma_\mathrm{target}/n_\ell)$ regardless of $N$,
decoupling training speed from dataset size and making $\eta$ uniformly interpretable.
Full derivation in Appendix~\ref{app:hessian}.

\paragraph{Initialisation.}
\label{sec:init}
Each encoder's base margin is set to the unscaled PCA projection
$Z_v^{(0)} = U_v[:, {:}K]$ (economy SVD of the centred view).
All non-zero initialisations perform well in practice; we found unscaled PCA
marginally best across five seeds (Appendix~\ref{app:init}).

\paragraph{Convergence.}
The EY loss has no spurious local minima under joint gradient descent: every critical
point of $\EY(Z_1, Z_2)$ over the full embedding matrices is a global
minimum (see \S\ref{sec:background}).  This guarantee applies to the continuous joint objective
and does not extend to the alternating GBT update, which is a two-timescale non-smooth
procedure; formal convergence guarantees for alternating boosting remain open.
Empirically, TCC plateaus within 200--500 rounds with no divergence or sub-optimal
fixed points across all benchmarks (Figure~\ref{fig:benchmarks}).

\subsection{TreeMCCA: Multi-View Extension}
\label{sec:multiview_method}

The two-view EY gradient (Eq.~\ref{eq:grad}) extends naturally to $M \geq 2$ views.
The all-pairs EY objective is $\mathcal{L} = \sum_{i<j} \EY(Z_i, Z_j)$, and its gradient
with respect to view $i$ (before normalisation) is:
\begin{equation}
  G_i = -\sum_{j \neq i} Z_{jc}
  \;+\; Z_{ic}\!\left[(M-1)\,V_{ii} + \sum_{j \neq i} V_{jj}\right].
  \label{eq:multiview_grad}
\end{equation}
At $M=2$, \textbf{TreeMCCA} reduces exactly to TreeCCA.  Results on five real-world
multi-view datasets are in \S\ref{sec:realworld_mat}; a synthetic four-view benchmark
is in Appendix~\ref{app:multiview}.

\subsection{TreeCCA-SSL: Siamese Single-Encoder Variant}
\label{sec:ssl_method}

TreeCCA extends to self-supervised learning via a siamese variant in which a single
shared booster is trained on independently augmented views.  The algorithm and
theoretical analysis are deferred to Appendix~\ref{app:ssl}.

\section{Experiments}
\label{sec:experiments}

Default hyperparameters and hardware are listed in Appendix~\ref{app:details}.
Pseudocode for all new methods is provided in Appendix~\ref{app:ssl} and~\ref{app:multiview}; code will be made available via GitHub upon acceptance.

\subsection{TreeCCA as a Nonlinear CCA: Comparison with Kernel CCA and Deep CCA}
\label{sec:sup_bench}

\paragraph{Synthetic benchmarks.}
We evaluate on two synthetic benchmarks with $K=3$ shared latent dimensions, 5 Gaussian
noise dimensions per view ($p_v = 8$), $N=3000$, 20\% test split, noise $\sigma=0.15$.
Performance is measured by \textbf{Total Correlation Captured} (TCC)
$= \sum_{k=1}^K |\rho_k^{\mathrm{test}}|$, maximum $K$.
The \textbf{Signed-Power} benchmark encodes latents $z_k \sim \mathcal{N}(0,1)$ as a
cube root in one view and a cube in the other --- bijective but nonlinear maps that
reduce the linear feature-to-latent correlation to $\approx 0.55$.
The \textbf{Hermite} benchmark uses $x_{1k} = H_2(z_k) = z_k^2 - 1$ (even) and
$x_{2k} = H_3(z_k) = z_k^3 - 3z_k$ (odd): probabilists' Hermite polynomials are
orthogonal under the Gaussian measure, so $\mathrm{corr}(H_2(z), H_3(z)) = 0$ for
$z \sim \mathcal{N}(0,1)$.  Linear CCA is therefore structurally blind to the signal ---
the views share no linear cross-covariance regardless of how the projections are chosen.

Table~\ref{tab:nonlinear} reports peak test TCC over 5 seeds ($\{42,0,1,2,3\}$,
500 rounds).  TreeCCA tops both baselines: on Signed Power it reaches $2.61 \pm 0.04$
vs.\ $2.43 \pm 0.04$ for Deep CCA ($+7.5\%$) and $2.28 \pm 0.03$ for KCCA; on
Hermite all three nonlinear methods far exceed linear CCA (near-zero at $0.14$), with
TreeCCA at $2.93 \pm 0.02$ edging Deep CCA ($2.89$) and KCCA ($2.68$).
Convergence curves are shown in Appendix~\ref{app:convergence} (Figure~\ref{fig:benchmarks}); TreeCCA reaches competitive TCC
within 100--150 rounds, without the $O(N^2)$ memory of KCCA.  Wall-clock scaling with $N$ is characterised in
Appendix~\ref{app:scalability}; multi-seed stability curves are in
Appendix~\ref{app:multiseed}.

\begin{table}[h]
  \centering
  \caption{Peak test TCC (mean $\pm$ std, 5 seeds, $N=3000$, $K=3$, 500 rounds).
           KCCA: RBF kernel, $c=0.1$, $\gamma=0.1$, $O(N^2)$ training, $\sim$13s.
           DCCA: EY loss, 128--64 hidden units, Adam, 1000 epochs.
           TreeCCA: 500 rounds, $\sim$50s.}
  \label{tab:nonlinear}
  \begin{tabular}{lcccc}
    \toprule
    Benchmark & Linear CCA & KCCA (RBF) & DeepCCA & TreeCCA (ours) \\
    \midrule
    Signed Power & $1.63 \pm 0.04$ & $2.28 \pm 0.03$ & $2.43 \pm 0.04$ & $\mathbf{2.61 \pm 0.04}$ \\
    Hermite      & $0.14 \pm 0.06$ & $2.68 \pm 0.05$ & $2.89 \pm 0.02$ & $\mathbf{2.93 \pm 0.02}$ \\
    \midrule
    Training time      & $<$0.1s & $\sim$13s  & $\sim$300s & $\sim$50s \\
    Memory             & $O(Np)$ & $O(N^2)$   & $O(Np)$        & $O(Np)$       \\
    Scalable $N>20$k   & \checkmark & $\times$ & \checkmark  & \checkmark    \\
    Feature importance & $\times$   & $\times$ & $\times$    & \checkmark    \\
    \bottomrule
  \end{tabular}
\end{table}

\subsubsection{Scale: Split MNIST}
\label{sec:split_mnist}

To test whether the performance gap holds at realistic scale, we split the
MNIST~\citep{lecun1998mnist} digit image dataset
($N=60{,}000$, 28$\times$28 pixels each) into left and right halves (392 pixels per view),
adding 10\% dropout noise to each view independently, and train with $K=50$ components.  This is a pure
generalisation test: more components than classes (10), large $N$, and a high-dimensional
signal that a poorly regularised method will overfit.

Table~\ref{tab:split_mnist} shows the result.  Deep CCA memorises the training set
(train TCC $= 49.7$) but generalises poorly (val TCC $= 25.5$), a $1.95\times$ train/val
ratio.  TreeCCA reaches a lower training TCC ($31.6$) but generalises better ($30.4$),
a $1.04\times$ ratio.  XGBoost's built-in regularisation --- depth limits, minimum child
weight, subsampling --- prevents the encoder from fitting noise in the training split.
This is the natural behaviour of gradient-boosted trees on moderate-$N$ datasets and is
precisely the regime of most multi-omics and clinical applications.

\begin{table}[h]
  \centering
  \caption{Split MNIST: left vs.\ right half-image views ($K=50$, $N_{\mathrm{tr}}=54{,}000$,
           $N_{\mathrm{val}}=6{,}000$, seed 42).  TCC $=\sum_{k=1}^K |\rho_k|$, max $=50$.}
  \label{tab:split_mnist}
  \begin{tabular}{lccc}
    \toprule
    Method & Train TCC & Val TCC & Train/Val \\
    \midrule
    Linear CCA    & 13.87 & 9.41  & $1.47\times$ \\
    DeepCCA (EY)  & 49.69 & 25.46 & $1.95\times$ \\
    TreeCCA (ours) & 31.59 & \textbf{30.44} & $1.04\times$ \\
    \bottomrule
  \end{tabular}
\end{table}

\subsubsection{Sensor Fusion: UCI HAR}
\label{sec:realworld}

The UCI HAR dataset~\citep{anguita2013har, dua2019uci} records inertial sensor data from smartphones
worn by 30 volunteers performing six activities.  We use the raw 2.56-second windows and
independently feature-extract the accelerometer and gyroscope channels, treating them as
two views.  Each view is represented by 36 features: 9 time-domain statistics (mean, std,
min, max, q25, q75, RMS, IQR, range) per axis $\times$ 3 axes, plus 9 magnitude features
(the L2 norm of each statistic across axes).  The magnitude features are motivated by the
Newton--Euler rotational dynamics: the centripetal acceleration $\omega \times (\omega \times r)$
is quadratic in the angular velocity magnitude $|\omega|$, a signal that is nonlinear in any
single axis but present in the norm.

We train with $K=6$ (one component per activity class) using 5 seeds.  Linear CCA and
TreeCCA are evaluated on test TCC and linear probe accuracy for the 6-way activity
classification task.

\begin{table}[h]
  \centering
  \caption{UCI HAR accelerometer $\leftrightarrow$ gyroscope ($K=6$, $p=36$, 5 seeds).
           Linear CCA is deterministic (no std). Deep CCA: EY loss, 128--64 hidden, Adam.}
  \label{tab:har}
  \begin{tabular}{lccc}
    \toprule
    Method & Test TCC & Probe acc. & Time/seed \\
    \midrule
    Linear CCA             & $2.63$                    & $0.182$                    & $<$1s \\
    Deep CCA               & $4.67 \pm 0.06$           & $0.510 \pm 0.010$ & 80s \\
    TreeCCA (ours)         & $4.32 \pm 0.22$           & $0.466 \pm 0.021$ & 17s \\
    \bottomrule
  \end{tabular}
\end{table}

Both TreeCCA and Deep CCA substantially outperform linear CCA ($+64\%$ TCC).
Deep CCA has a small edge in TCC and probe accuracy, at $5\times$ the wall-clock cost.
TreeCCA does not require a GPU; for very large-scale applications, GPU-accelerated
training is a drop-in option via XGBoost and LightGBM's native GPU backends.
The deeper practical advantage is ease of iteration: gradient-boosted trees require no
architecture search, perform well at the moderate sample sizes typical of multi-omics
and clinical CCA applications, and are robust to hyperparameter choice with sensible
defaults.  Neural encoders, by contrast, typically require tuning of architecture width,
depth, and learning rate to realise their potential.  A systematic sweep over depth and
learning rate confirms TreeCCA's robustness (Appendix~\ref{app:hp}).  For a
practitioner starting a new multi-view analysis, TreeCCA is the natural first step.  The further distinction is interpretability:
unlike neural encoders, XGBoost's gain importances identify which features drive the
cross-sensor signal, making the model's reasoning legible to practitioners.  We examine
those importances directly in \S\ref{sec:har_importance}, where they are consistent with
a physics-motivated hypothesis about angular velocity magnitude --- an analysis
structurally impossible to perform with a neural encoder.

\subsection{Nonlinear Feature Recovery: TreeCCA vs.\ Sparse CCA}
\label{sec:sparse_recovery}

\paragraph{Structural impossibility of linear sparse CCA.}
PMD~\citep{witten2009pmd} assumes cross-view correlation is driven by a small subset of
features.  This is operationalised by an L1 penalty on linear loading vectors.  But the
penalty acts on the linear objective: even perfect feature selection cannot recover a
signal whose linear cross-view covariance is zero.  TreeCCA achieves \emph{structural
sparsity} without any penalty: because each split in a decision tree selects exactly one
feature as the split criterion, gain concentrates on features that genuinely reduce the
EY loss.  Features irrelevant to the cross-view signal receive no splits and accumulate
zero gain regardless of ambient dimension.

\paragraph{Synthetic sparse recovery benchmark.}
$S=5$ true features per view encode a nonlinear signal: $x_{1k} = \sgn(z_k)|z_k|^{0.5}$
and $x_{2k} = z_k^2 - 1$ (Hermite $H_2$, zero linear cross-view covariance).  The
remaining $p - S$ features are pure Gaussian noise.  $N=500$, $K=5$, 5 seeds.
We evaluate \textbf{Precision@$S$}: fraction of the top-$S$ importance-ranked features
that are truly causal.

\begin{figure}[H]
  \centering
  \includegraphics[width=0.9\linewidth]{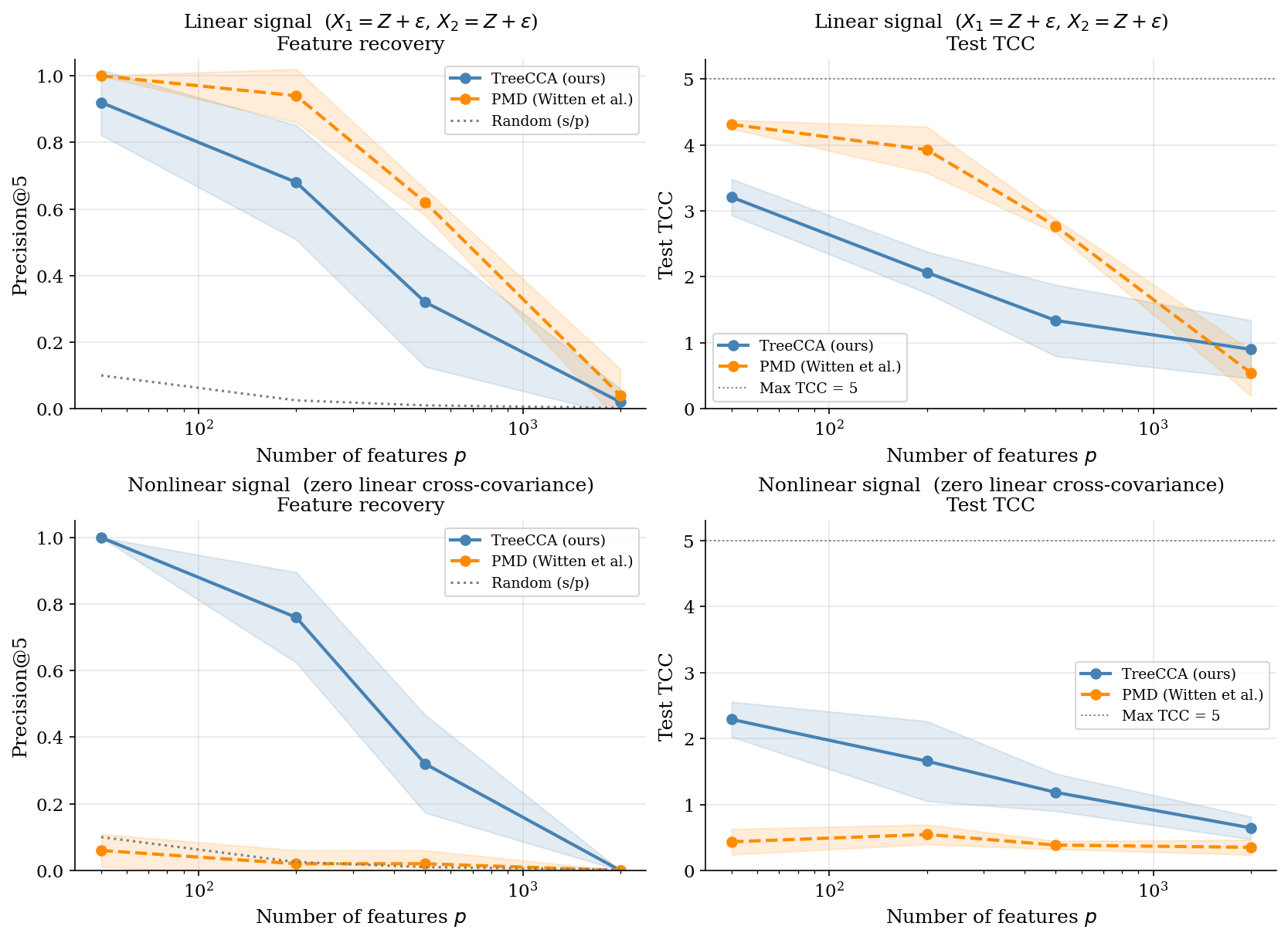}
  \caption{Sparse feature recovery on nonlinear signal ($\sgn(z)|z|^{0.5}
           \leftrightarrow z^2-1$, zero linear cross-covariance).
           \textbf{Left}: Precision@$S$ vs.\ ambient dimension $p$; TreeCCA (blue) achieves
           $1.00$ at $p=50$ and degrades gracefully, PMD (orange) stays at random baseline
           throughout.
           \textbf{Right}: TCC vs.\ $p$; TreeCCA maintains substantial TCC while PMD
           remains near zero at all $p$.
           Mean $\pm$ std over 5 seeds.}
  \label{fig:sparse_recovery}
\end{figure}

\begin{table}[h]
  \centering
  \caption{Sparse feature recovery on nonlinear signal ($S=5$, $N=500$, 5 seeds, mean $\pm$ std).}
  \label{tab:sparse_recovery}
  \begin{tabular}{llcc}
    \toprule
    $p$ & Method & Precision@$S$ & TCC \\
    \midrule
    \multirow{2}{*}{50}   & PMD     & $0.06 \pm 0.05$          & $0.43 \pm 0.19$ \\
                          & TreeCCA & $\mathbf{1.00 \pm 0.00}$ & $\mathbf{2.31 \pm 0.24}$ \\
    \midrule
    \multirow{2}{*}{200}  & PMD     & $0.02 \pm 0.04$          & $0.54 \pm 0.15$ \\
                          & TreeCCA & $\mathbf{0.78 \pm 0.13}$ & $\mathbf{1.75 \pm 0.55}$ \\
    \midrule
    \multirow{2}{*}{500}  & PMD     & $0.02 \pm 0.04$          & $0.38 \pm 0.06$ \\
                          & TreeCCA & $\mathbf{0.26 \pm 0.14}$ & $\mathbf{1.24 \pm 0.32}$ \\
    \midrule
    \multirow{2}{*}{2000} & PMD     & $0.00 \pm 0.00$          & $0.35 \pm 0.11$ \\
                          & TreeCCA & $\mathbf{0.06 \pm 0.12}$ & $\mathbf{0.60 \pm 0.17}$ \\
    \bottomrule
  \end{tabular}
\end{table}

PMD's failure here is structural, not a matter of tuning: the benchmark signal has
zero linear cross-view covariance by construction (Hermite orthogonality), so no L1
penalty on a linear objective can recover the causal features.  PMD serves as a
reference demonstrating this structural limitation, not as a competitive nonlinear
sparse baseline.  TreeCCA, by contrast, achieves perfect precision at $p=50$ and
degrades gracefully --- precision $0.78$ at $p=200$, $0.26$ at $p=500$ --- while PMD
stays near zero throughout.  Even at $p=2000$ ($p/N=4$), where neither method recovers
the true support, TreeCCA retains a substantial TCC advantage ($0.60$ vs.\ $0.35$).

\subsubsection{Interpretable Feature Selection on Real Data: UCI HAR}
\label{sec:har_importance}

If the magnitude features in the HAR dataset encode the centripetal acceleration
structure we hypothesised, they should appear prominently in XGBoost's learned feature
importances --- and should do so more strongly for the gyroscope (which directly measures
angular velocity $\omega$) than for the accelerometer (which measures the combined linear
and centripetal acceleration).

Figure~\ref{fig:har_importance} shows the result.  We use XGBoost's \emph{gain} importance:
the total reduction in the EY loss objective achieved by splits on each feature, summed
across all trees and all $K=6$ boosters.  For the gyroscope view, magnitude features
account for \textbf{47\%} of total gain, with the top two features being
\texttt{gyr\_mag\_iqr} and \texttt{gyr\_mag\_std} --- both encoding the variability of
$|\omega|$ over the window.  For the accelerometer,
magnitude features account for 26\% of gain, with single-axis range features
dominating (locomotion direction is already highly discriminative for the accelerometer).

This result makes the feature selection legible in a way that is impossible with DCCA:
the gain importances are consistent with the physical intuition that drove the feature
engineering.  Practitioners can inspect which statistics the model actually uses, remove
redundant features, and re-run in seconds.

\begin{figure}[H]
  \centering
  \includegraphics[width=\linewidth]{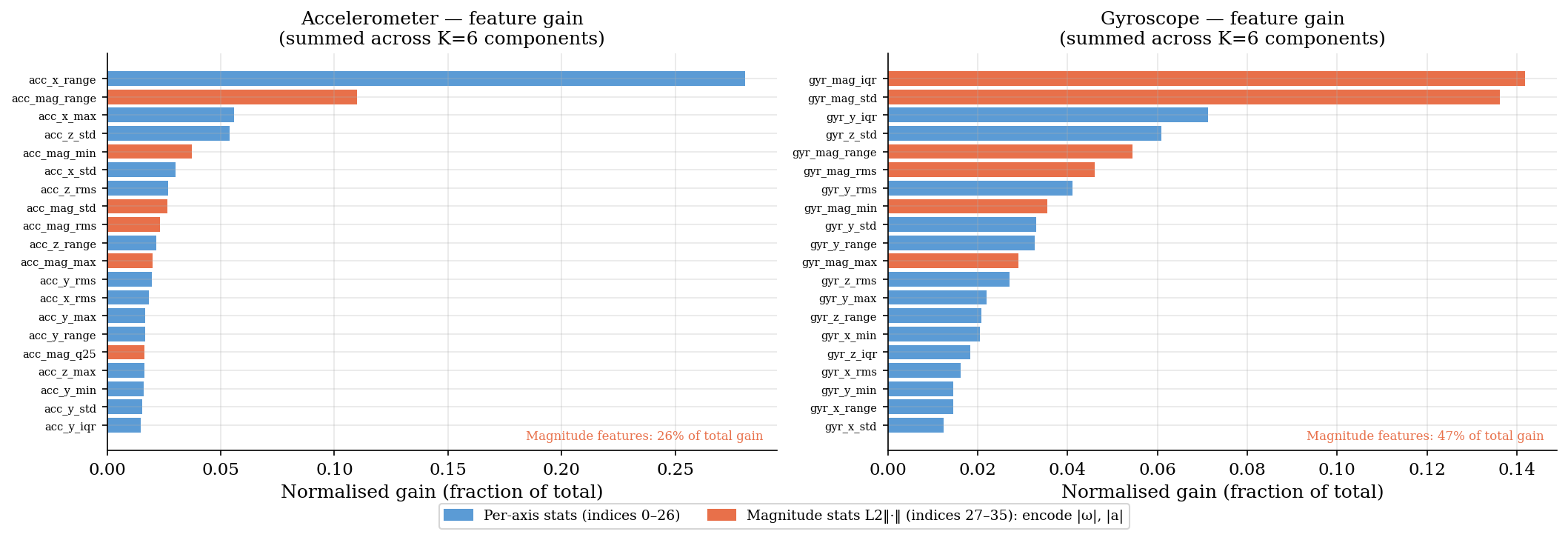}
  \caption{Top-20 XGBoost feature importances (gain, summed across $K=6$ boosters) for
           gyroscope (left) and accelerometer (right) views on UCI HAR, seed 42.
           Magnitude features (orange) account for 47\% of gyroscope gain and 26\% of
           accelerometer gain, consistent with the Newton--Euler physics motivation.
           DCCA has no equivalent: its feature weights are a $64 \times 36$ matrix
           with no interpretable structure.}
  \label{fig:har_importance}
\end{figure}

\subsubsection{Multi-View Dataset Sweep}
\label{sec:realworld_mat}

To assess breadth, we evaluate on five publicly available multi-view datasets
(Table~\ref{tab:realworld_mat}): Caltech101-7 (6 views, 7 classes),
3Sources (3 views, news articles from three outlets, 6 classes),
NUS-WIDE (5 views, image tags and features, 12 classes),
Handwritten (6 views, digits in multiple descriptor spaces, 10 classes), and
MSRC-v5 (5 views, image descriptors, 7 classes)~\citep{zhang2024multiview}.
These benchmarks have been used by recent deep CCA
methods~\citep{he2024nrdcca, jin2015crossmodal, yuan2019multiview}.
Each dataset is evaluated with $K = C{-}1$ components ($C$ classes) and 5 seeds.
Both methods use all $M$ views jointly: MCCA and TreeMCCA.
We report test TCC and linear probe accuracy on the concatenation of all $M$ view embeddings.
No preprocessing beyond standardisation is applied to TreeMCCA; XGBoost's histogram
split finding handles high-dimensional views natively.

\begin{table}[h]
  \centering
  \small
  \caption{Multi-view dataset sweep (mean $\pm$ std, 5 seeds).  $K = C{-}1$ components
           per dataset ($C$ = number of classes).
           Both methods use all $M$ views jointly: MCCA and TreeMCCA (\texttt{train\_treecca\_multiview}).
           Acc = linear probe accuracy on concatenated $M$-view embeddings.}
  \label{tab:realworld_mat}
  \begin{tabular}{l@{\;}c@{\;}c@{\quad}c@{\quad}c@{\quad}c}
    \toprule
    & & \multicolumn{2}{c}{MCCA} & \multicolumn{2}{c}{TreeMCCA} \\
    \cmidrule(lr){3-4}\cmidrule(lr){5-6}
    Dataset & $M$/$K$ & TCC & Acc & TCC & Acc \\
    \midrule
    Caltech101-7 & 6/6  & $3.77 \pm 0.14$ & $0.900 \pm 0.017$ & $\mathbf{5.16 \pm 0.09}$ & $\mathbf{0.913 \pm 0.018}$ \\
    3Sources     & 3/5  & $1.30 \pm 0.69$ & $0.176 \pm 0.083$ & $\mathbf{3.63 \pm 0.11}$ & $\mathbf{0.706 \pm 0.026}$ \\
    NUS-WIDE     & 5/11 & $3.03 \pm 0.11$ & $0.349 \pm 0.019$ & $\mathbf{5.28 \pm 0.18}$ & $\mathbf{0.365 \pm 0.022}$ \\
    Handwritten  & 6/9  & $4.56 \pm 0.03$ & $\mathbf{0.911 \pm 0.009}$ & $\mathbf{5.43 \pm 0.05}$ & $0.872 \pm 0.021$ \\
    MSRC-v5      & 5/6  & $1.67 \pm 0.07$ & $0.662 \pm 0.063$ & $\mathbf{3.47 \pm 0.22}$ & $\mathbf{0.862 \pm 0.063}$ \\
    \bottomrule
  \end{tabular}
\end{table}

TreeMCCA achieves higher TCC than MCCA on all five datasets.  Accuracy gains are
largest on 3Sources and MSRC-v5, where the cross-view signal is most nonlinear.
Handwritten is the one exception on accuracy: MCCA achieves $0.911$ vs TreeMCCA's
$0.872$, despite TreeMCCA capturing substantially more cross-view variance
(TCC $5.43$ vs $4.56$) --- higher TCC does not always imply higher linear-probe
accuracy when MCCA already embeds the discriminative signal well.  t-SNE
visualisations of the test embeddings are in Appendix~\ref{app:tsne}.

\subsection{TreeCCA-SSL}
\label{sec:ssl_exp}

Full details and results are in Appendix~\ref{app:ssl}.
On a synthetic benchmark where the cross-view signal is invisible to any affine encoder,
TreeCCA-SSL reaches accuracy $0.630$ --- $3.8\times$ above random chance ($0.167$) and
well above PCA ($0.159$) --- showing that tree encoders can exploit cross-view structure
that linear methods structurally cannot.  Principled augmentation design for general
tabular data remains an open problem.

\section{Conclusion}

We have shown that gradient-boosted trees can be trained end-to-end as CCA encoders by
treating the EY loss as a custom GBT objective (\S\ref{sec:method}).  The result is a
nonlinear CCA method with no bespoke training code and no architecture search.

The experiments demonstrate that TreeCCA offers three properties in a single method:
\begin{itemize}
  \item \textbf{Nonlinear and sparse signal recovery.}  TreeCCA matches or exceeds
        Deep CCA on synthetic benchmarks (Table~\ref{tab:nonlinear}) and generalises
        substantially better at scale ($N=54\text{k}$, $K=50$; train/val ratio
        $1.04\times$ vs.\ $1.95\times$; Section~\ref{sec:split_mnist}).  On a sparse
        benchmark with zero linear cross-view covariance, TreeCCA achieves
        $\text{Precision@}S=1.00$ at $p=50$ while PMD finds no signal at any $p$
        (Table~\ref{tab:sparse_recovery}; Section~\ref{sec:sparse_recovery}).
  \item \textbf{Interpretability.}  On UCI HAR, XGBoost gain importances directly
        validate a Newton--Euler physics hypothesis about angular velocity magnitude
        (Section~\ref{sec:har_importance}) --- an insight not readily available with
        neural encoders.
  \item \textbf{Practical breadth.}  TreeCCA achieves comparable accuracy to Deep CCA
        on UCI HAR at $5\times$ lower wall-clock cost.  TreeMCCA exceeds
        linear CCA on TCC across all five heterogeneous multi-view benchmarks, with the
        largest gains where features are most nonlinear (Table~\ref{tab:realworld_mat}).
        The method extends naturally to $M>2$ views and self-supervised learning
        (Sections~\ref{sec:multiview_method},~\ref{sec:ssl_exp}).
\end{itemize}

\paragraph{Limitations.}
Sparse recovery precision degrades at high $p/N$; column subsampling is a natural
remedy.  Formal convergence theory for alternating GBT updates remains open.  For SSL,
the primary open problem is principled tabular augmentation design.

\bibliographystyle{plainnat}
\bibliography{treecca}

\section*{NeurIPS Paper Checklist}

\begin{enumerate}

\item {\bf Claims}
    \item[] Question: Do the main claims made in the abstract and introduction accurately reflect the paper's contributions and scope?
    \item[] Answer: \answerYes{}
    \item[] Justification: All quantitative claims (TCC values, percentage improvements) are backed by reported experimental results with stated seeds, datasets, and hyperparameters.

\item {\bf Limitations}
    \item[] Question: Does the paper discuss the limitations of the work performed by the authors?
    \item[] Answer: \answerYes{}
    \item[] Justification: A Limitations paragraph in the Conclusion discusses degradation at high $p/N$, the absence of formal convergence guarantees, and the open problem of tabular augmentation design for SSL.

\item {\bf Theory assumptions and proofs}
    \item[] Question: For each theoretical result, does the paper provide the full set of assumptions and a complete (and correct) proof?
    \item[] Answer: \answerYes{}
    \item[] Justification: Proposition~\ref{prop:linear_fail} states assumptions explicitly and is proved in full in Appendix~\ref{app:ssl}.

\item {\bf Experimental result reproducibility}
    \item[] Question: Does the paper fully disclose all the information needed to reproduce the main experimental results of the paper?
    \item[] Answer: \answerYes{}
    \item[] Justification: Seeds, hyperparameters, and hardware are stated in Appendix~\ref{app:details}. All synthetic DGPs are given in closed form. Pseudocode for all new methods is provided in the appendix; code will be made available via GitHub upon acceptance.

\item {\bf Open access to data and code}
    \item[] Question: Does the paper provide open access to the data and code?
    \item[] Answer: \answerYes{}
    \item[] Justification: Pseudocode for all new methods is provided in the appendix; code will be made available via GitHub upon acceptance. All real-world datasets are publicly available (UCI HAR~\citep{anguita2013har}).

\item {\bf Experimental setting/details}
    \item[] Question: Does the paper specify all training and test details?
    \item[] Answer: \answerYes{}
    \item[] Justification: Hyperparameters, data splits, and evaluation protocols are in Appendix~\ref{app:details} and within each section.

\item {\bf Experiment statistical significance}
    \item[] Question: Does the paper report error bars?
    \item[] Answer: \answerYes{}
    \item[] Justification: Multi-seed experiments (5 seeds) report mean $\pm$ std in Tables~\ref{tab:nonlinear}, \ref{tab:har}, \ref{tab:sparse_recovery}, and~\ref{tab:realworld_mat}.
    The Split MNIST experiment (Table~\ref{tab:split_mnist}) does not report error bars;
    DCCA at this scale ($N=54\text{k}$, $K=50$, 1000 epochs) requires substantial compute per run.

\item {\bf Experiments compute resources}
    \item[] Question: Does the paper provide sufficient information on compute?
    \item[] Answer: \answerYes{}
    \item[] Justification: All experiments run on an Apple M3 MacBook Pro. Runtimes are stated throughout.

\item {\bf Code of ethics}
    \item[] Question: Does the research conform with the NeurIPS Code of Ethics?
    \item[] Answer: \answerYes{}
    \item[] Justification: Foundational statistical method with no direct path to harmful applications.

\item {\bf Broader impacts}
    \item[] Question: Does the paper discuss societal impacts?
    \item[] Answer: \answerYes{}
    \item[] Justification: The primary application domain (multi-omics) has positive scientific impact. No direct negative societal impacts beyond general dual-use concerns.

\item {\bf Safeguards}
    \item[] Question: Does the paper describe safeguards for responsible release?
    \item[] Answer: \answerNA{}
    \item[] Justification: Statistical learning algorithm and experiment code; no misuse risks analogous to generative models or scraped datasets.

\item {\bf Licenses for existing assets}
    \item[] Question: Are creators of assets properly credited?
    \item[] Answer: \answerYes{}
    \item[] Justification: All datasets cited. XGBoost and LightGBM cited~\citep{chen2016, ke2017lightgbm} and used under Apache 2.0.

\item {\bf New assets}
    \item[] Question: Are new assets well documented?
    \item[] Answer: \answerYes{}
    \item[] Justification: Pseudocode for all new methods is provided in the appendix; a public repository will be made available upon acceptance.

\item {\bf Crowdsourcing and research with human subjects}
    \item[] Question: Does the paper include crowdsourcing or human subjects details?
    \item[] Answer: \answerNA{}
    \item[] Justification: No crowdsourcing or human subjects.

\item {\bf Declaration of LLM usage}
    \item[] Question: Does the paper describe LLM usage?
    \item[] Answer: \answerNA{}
    \item[] Justification: LLMs were used for writing assistance only, not as part of the core methodology.

\end{enumerate}

% ─────────────────────────────────────────────────────────────────────────────
% Appendix (part of the paper; included via separate file for cleanliness)
% ─────────────────────────────────────────────────────────────────────────────
% ─────────────────────────────────────────────────────────────────────────────
% appendix.tex  —  TreeCCA supplementary material
% Included via \input{appendix} in treecca.tex (preprint) or compiled
% standalone via treecca_supp.tex (NeurIPS submission).
% ─────────────────────────────────────────────────────────────────────────────

\appendix
% ─────────────────────────────────────────────────────────────────────────────

\section{Convergence Curves}
\label{app:convergence}

\begin{figure}[H]
  \centering
  \includegraphics[width=\linewidth]{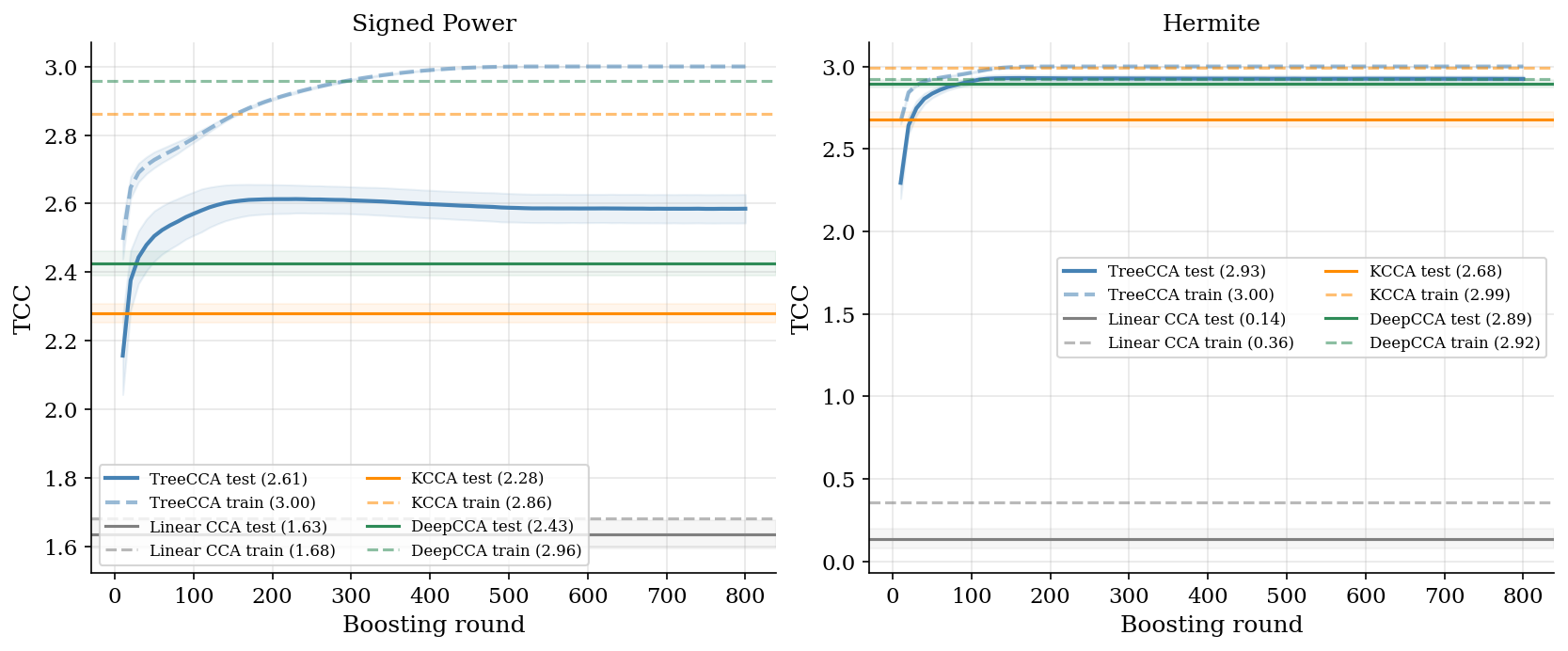}
  \caption{Test TCC (solid) and train TCC (dashed) vs.\ boosting round for TreeCCA,
           with Linear CCA, KCCA, and DeepCCA (EY loss) shown as horizontal baselines.
           \textbf{Left}: Signed Power; TreeCCA reaches 2.62 ($+56\%$ over linear, $+8\%$ over DCCA).
           \textbf{Right}: Hermite; linear CCA is near-zero (0.13); TreeCCA reaches
           $22\times$ improvement (99\% of oracle), marginally above DCCA.}
  \label{fig:benchmarks}
\end{figure}

\section{TreeCCA as Alternating Regression}
\label{app:alt_reg}

\paragraph{Why alternating least squares converges to CCA.}
Classical CCA can be solved by alternating regression~\citep{golub1995canonical,xu2019als}.
Given fixed embeddings $Z_2 = X_2 W_2$, the optimal linear encoder for view 1 is the
solution to the least-squares problem
\[
  \min_{W_1}\; \|X_1 W_1 - Z_2\|_F^2,\quad
  \text{giving}\quad W_1 \leftarrow (X_1^\top X_1)^{-1} X_1^\top Z_2,
\]
and symmetrically for $W_2$.  Convergence to the CCA solution follows because the
alternating update is a block coordinate descent on the CCA objective: at each step,
the within-view Gram matrix $X_v^\top X_v$ is implicitly used to normalise the
projection, enforcing the CCA orthonormality constraint $W_v^\top \Sigma_{vv} W_v = I_K$
in the limit.  Each subproblem has a unique closed-form solution, and the objective
decreases monotonically, so convergence is guaranteed~\citep{xu2019als}.

\paragraph{Why this approach fails for gradient-boosted trees.}
Replacing the linear learner with a GBT encoder in the alternating LS framework breaks
in two ways.

\emph{Scale and component collapse.}
The LS objective $\|Z_1 - Z_2\|_F^2$ drives $Z_1 \to Z_2$, collapsing both embeddings
toward the same point.  GBT leaf regularisation ($\lambda\|\mathrm{leaf}\|^2$) then
shrinks both toward zero.  Even if collapse is prevented for one component, without
orthogonality constraints all $K$ dimensions converge to the same dominant direction.

\emph{The normalisation bottleneck.}
In the linear case, normalisation is implicit: the closed-form solution automatically
accounts for the within-view covariance.  For a GBT there is no closed-form solution;
normalisation would require refitting all trees from scratch after each step to enforce
$Z_v^\top Z_v = (N-1) I_K$, making the method computationally prohibitive.

\paragraph{How the EY loss resolves both problems.}
The EY gradient
\[
  G_{1,i} \propto -Z_{2c,i} + \V\, Z_{1c,i}
\]
resolves both failure modes structurally.  The cross-view term $-Z_{2c,i}$ is the
alternating LS signal; the within-view term $+\V Z_{1c,i}$ penalises covariance
expansion, replacing the implicit normalisation constraint with an explicit gradient
penalty.  No separate normalisation step is required, and GBT leaf regularisation then
provides step-size control without collapsing the embeddings.

\paragraph{Gauss-Seidel vs.\ Jacobi.}
Classical alternating regression is strict Gauss-Seidel: update $W_1$, re-compute
$Z_1$, then compute $G_2$ from the fresh $Z_1$.  TreeCCA uses Jacobi: both gradients
$G_1, G_2$ are computed from the same $(Z_1, Z_2)$ before either booster is updated.
In practice the two schedules are indistinguishable ---
peak TCC differs by $\leq 0.004$ on both benchmarks (Appendix~\ref{app:gs}) --- and
Jacobi is preferred for its simplicity.

\section{Experimental Details}
\label{app:details}

\paragraph{Hardware and software.}
All experiments run on a single MacBook Pro (Apple M3).
Python 3.12, XGBoost 2.1, LightGBM 4.3, scikit-learn 1.5, NumPy 1.26.

\paragraph{GBT hyperparameters (all experiments unless stated).}
TreeCCA (\texttt{train\_treecca}, default): $K$ scalar XGBoost Boosters per view.
\texttt{tree\_method='hist'}, \texttt{base\_score=0.0},
\texttt{learning\_rate=0.1}, \texttt{max\_depth=5}, \texttt{subsample=0.8},
\texttt{colsample\_bytree=0.8}, \texttt{min\_child\_weight=5}.
HAR experiment: \texttt{learning\_rate=0.05}, \texttt{max\_depth=3} (reduced depth
prevents overfitting on $p=36$ features).
TreeCCA-Joint (\texttt{train\_treecca\_joint}): same parameters plus
\texttt{multi\_strategy='multi\_output\_tree'}, \texttt{base\_score=[0.0]*K}.
Unit Hessians are used throughout (\S\ref{sec:method}).

\paragraph{DCCA configuration.}
Two hidden layers: $[128, 64]$ units, ReLU activations, BatchNorm.
Adam optimiser, learning rate $10^{-3}$, 1000 epochs.
Trained via PyTorch Lightning on CPU.
Split MNIST: $[1024, 1024]$ units, no BatchNorm.

\paragraph{Reproducibility.}
All synthetic datasets use \texttt{numpy.random.default\_rng} with seed 42 (single-seed
experiments) or seeds $\{42,0,1,2,3\}$ (multi-seed experiments).

\section{GBT Design Comparison: TreeCCA vs.\ TreeCCA-Joint}
\label{app:design}

A central implementation question is how to structure the $K$-dimensional encoder per
view.  We compare three configurations on both benchmarks (Figure~\ref{fig:design}):

\begin{itemize}
  \item \textbf{TreeCCA} (\texttt{train\_treecca}): $K$ independent scalar XGBoost
        boosters per view.  Recommended default.
  \item \textbf{TreeCCA-LightGBM} (\texttt{train\_treecca\_lgbm}): same independent
        architecture using LightGBM, isolating the library effect.
  \item \textbf{TreeCCA-Joint} (\texttt{train\_treecca\_joint}): one
        \texttt{multi\_output\_tree} XGBoost booster per view, all $K$ gradient columns
        coupled in a single split-finding pass.
\end{itemize}

\paragraph{Fair comparison: trees per view.}
TreeCCA adds $K$ trees per view per round while TreeCCA-Joint adds 1.  Running
TreeCCA-Joint for $K\times$ as many rounds equalises the cumulative tree count.
On this axis (Figure~\ref{fig:design}, centre), TreeCCA-Joint converges faster per tree
on both benchmarks: joint split-finding aggregates $K$ gradient signals and is more
information-efficient per tree.

\paragraph{Wall-clock time.}
Despite needing fewer trees, TreeCCA-Joint is slower in wall-clock time
(Figure~\ref{fig:design}, right): each joint tree maintains $K$-wide histograms,
and this overhead exceeds the per-tree saving.

\paragraph{Library effect and recommendation.}
TreeCCA and TreeCCA-LightGBM nearly coincide, confirming library choice has negligible
impact when the architecture is fixed.
Use \textbf{TreeCCA} when training speed matters (the common case);
use \textbf{TreeCCA-Joint} when model size or inference latency is a priority.

\begin{figure}[H]
  \centering
  \includegraphics[width=\linewidth]{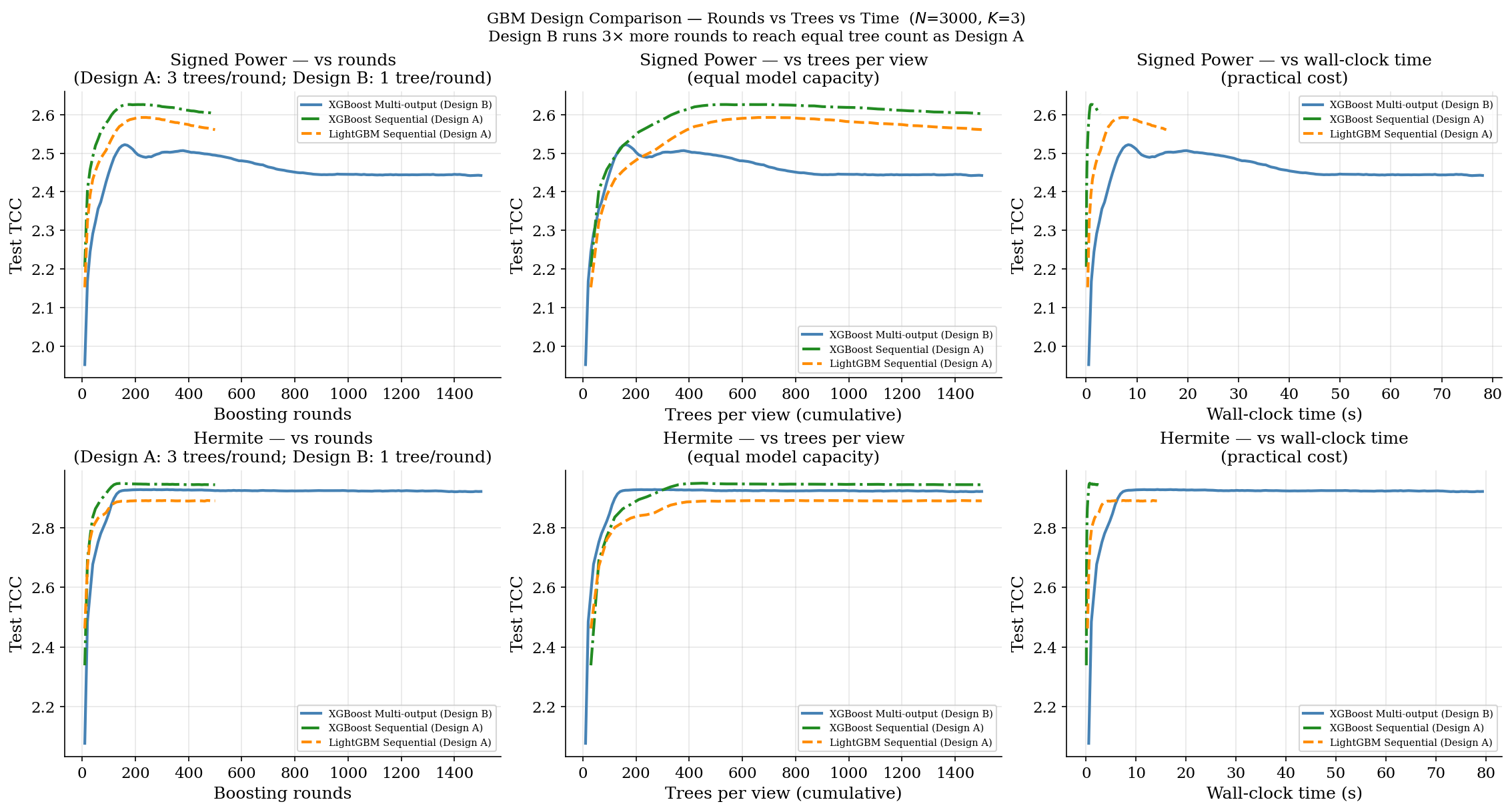}
  \caption{TreeCCA vs.\ TreeCCA-Joint vs.\ TreeCCA-LightGBM on signed-power (top) and
           Hermite (bottom).
           \textbf{Left}: vs rounds (TreeCCA adds $K$ trees per round).
           \textbf{Centre}: vs trees per view (fair) --- TreeCCA-Joint converges faster
           per tree.
           \textbf{Right}: vs wall-clock --- TreeCCA is faster in practice.}
  \label{fig:design}
\end{figure}

\section{Hyperparameter Sensitivity}
\label{app:hp}

Figure~\ref{fig:hp} shows that TreeCCA is broadly robust to hyperparameter choice: across
\texttt{max\_depth} $\in \{3,4,5,6\}$ and \texttt{learning\_rate} $\in \{0.10,0.20\}$
(500 rounds, seed 42), peak test TCC varies by less than $0.07$.
The only configuration that underperforms is \texttt{lr}$=0.05$, which has not converged
at 500 rounds; any faster learning rate works well.

\begin{figure}[H]
  \centering
  \includegraphics[width=0.55\linewidth]{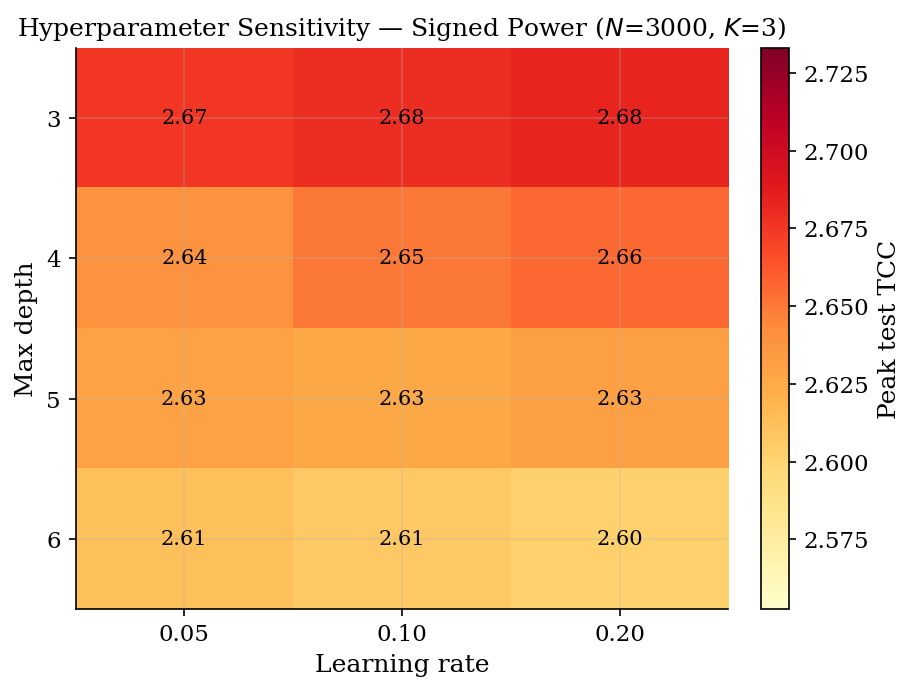}
  \caption{Hyperparameter sensitivity heatmap (signed-power benchmark).  Performance
           is robust at \texttt{lr}$\geq$0.10 across all depths tested.}
  \label{fig:hp}
\end{figure}

\section{Scalability}
\label{app:scalability}

Each TreeCCA round fits $2K$ scalar trees, giving total complexity $O(R \cdot K \cdot N \cdot p)$.
Figure~\ref{fig:scale} plots test TCC against wall-clock time for sweeps over $N$
(left, fixed $p=K=3$) and $p$ (right, fixed $N=5{,}000$).

The key message: the TCC ceiling is the same at every scale.  Curves shift right as
$N$ or $p$ grows, but all reach the same final performance within 200 rounds.
Per-round timing remains well under one second even at $N=100{,}000$ or $p=3{,}000$
on a single CPU core.

\begin{figure}[H]
  \centering
  \includegraphics[width=\linewidth]{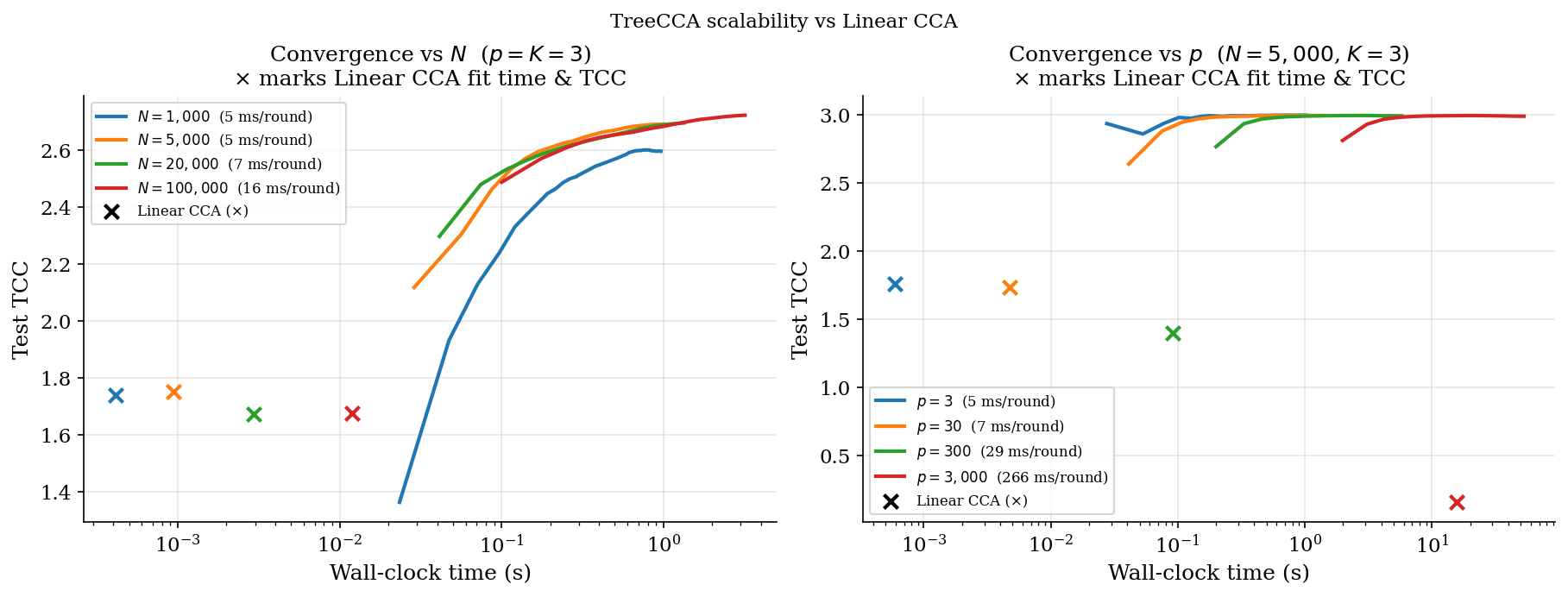}
  \caption{\textbf{TreeCCA convergence.}  Test TCC vs.\ wall-clock time on the
           signed-power benchmark.
           \textbf{Left}: varying $N$ (fixed $p=K=3$) --- consistent with $O(N)$
           per-round cost.
           \textbf{Right}: varying $p$ (fixed $N=5{,}000$) --- consistent with $O(Np)$.}
  \label{fig:scale}
\end{figure}

\section{Initialisation Ablation}
\label{app:init}

Figure~\ref{fig:init} compares five initialisation strategies on the Hermite benchmark,
averaged over 5 random seeds.  PCA unscaled (the default) reaches the highest peak test
TCC.  All non-zero strategies eventually escape the saddle; zero initialisation is
permanently stuck because $G=0$ exactly when $Z=0$.

\begin{figure}[H]
  \centering
  \includegraphics[width=\linewidth]{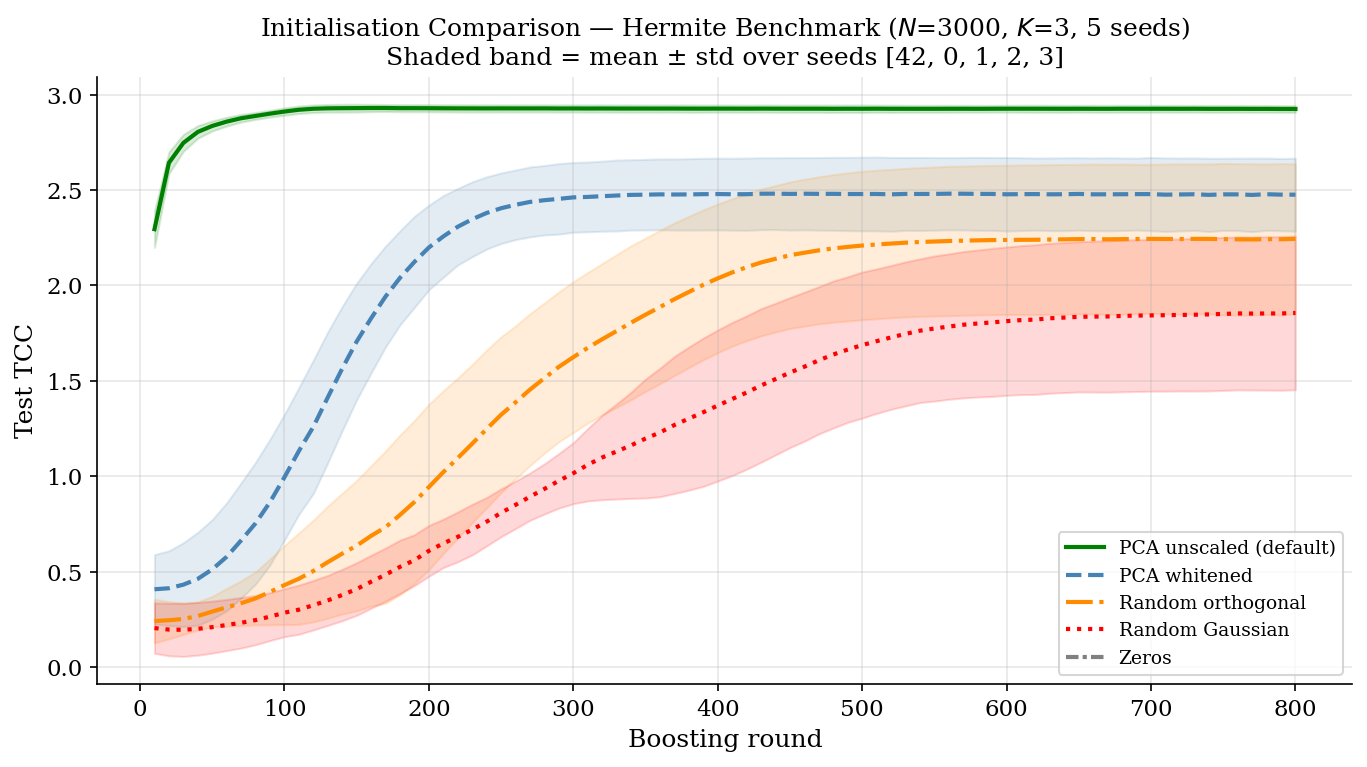}
  \caption{Initialisation comparison on the Hermite benchmark (mean $\pm$ std, 5 seeds).
           PCA unscaled (default) converges fastest and reaches the highest peak TCC.
           All non-zero strategies escape the saddle; zero init is permanently stuck.}
  \label{fig:init}
\end{figure}

\section{Jacobi vs.\ Gauss-Seidel Updates}
\label{app:gs}

As discussed in Appendix~\ref{app:alt_reg}, TreeCCA uses Jacobi updates (both gradients
computed before either booster is updated), whereas Gauss-Seidel would re-predict $Z_1$
after updating $\mathcal{B}_1$ before computing $G_2$.  Figure~\ref{fig:gs} confirms the
two schedules are indistinguishable empirically.

\begin{figure}[H]
  \centering
  \includegraphics[width=0.49\linewidth]{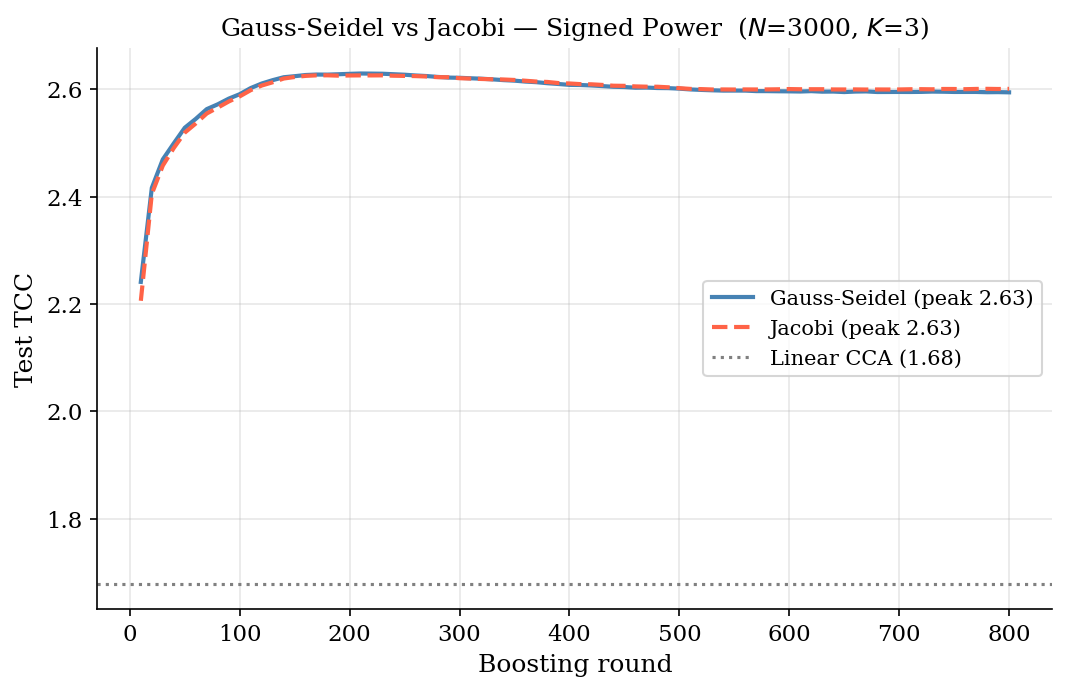}
  \includegraphics[width=0.49\linewidth]{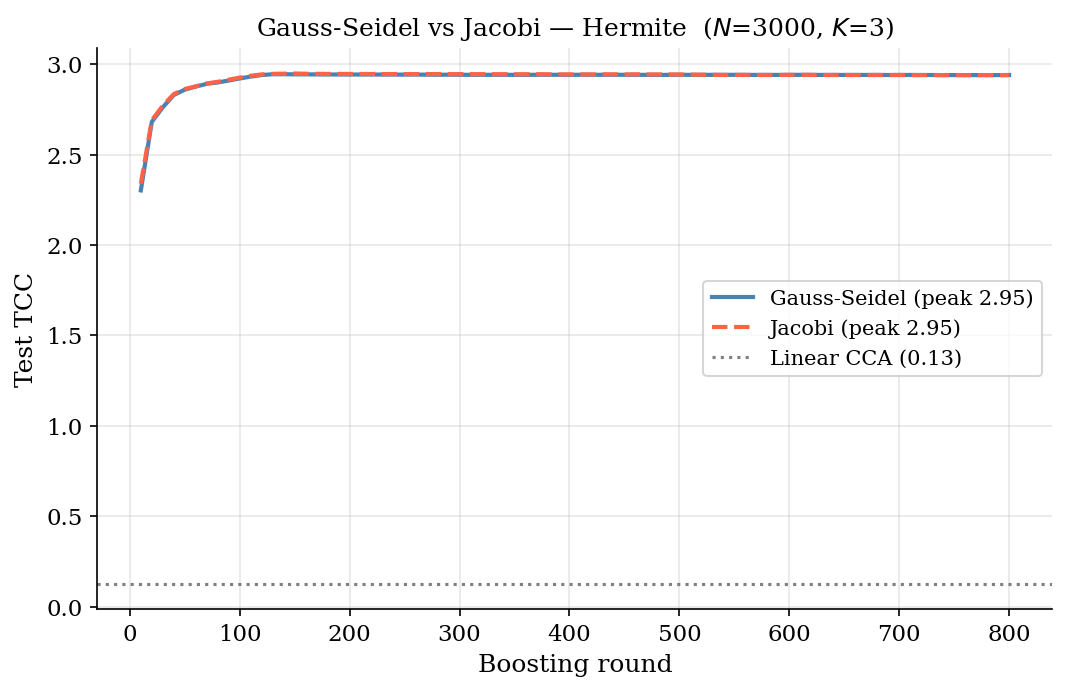}
  \caption{Gauss-Seidel vs.\ Jacobi on signed-power (left) and Hermite (right).
           Both schedules achieve identical peak TCC.}
  \label{fig:gs}
\end{figure}

\section{GBT Custom Objectives: Gradient, Hessian, and Leaf Values}
\label{app:hessian}

This section is a self-contained derivation of the two implementation choices in
TreeCCA --- gradient normalisation and unit Hessians --- starting from how XGBoost's
custom objective API works.

\paragraph{How XGBoost uses a custom objective.}
At each boosting round, the user supplies two length-$N$ arrays: a per-sample gradient
$g_i$ and a per-sample Hessian $h_i$.  XGBoost grows one tree by finding the split
partition that maximises gain, then assigns each leaf $\mathcal{I}_\ell$ the optimal weight
\[
  w^*_\ell = -\frac{\displaystyle\sum_{i \in \mathcal{I}_\ell} g_i}
                   {\displaystyle\sum_{i \in \mathcal{I}_\ell} h_i + \lambda},
\]
where $\lambda$ is the L2 leaf regularisation (default 1).  The prediction update is
$\Delta\hat{y}_i = \eta \cdot w^*_{\ell(i)}$, where $\eta$ is the learning rate.

The \textbf{denominator} controls step size.  For standard least-squares regression,
$g_i = \hat{y}_i - y_i$ and $h_i = 1$ (Hessian of $\tfrac{1}{2}(y_i-\hat{y}_i)^2$),
giving denominator $n_\ell + \lambda$ and leaf value equal to the \emph{average} residual
in the leaf, shrunk by $\lambda$.  This averaging is what makes gradient boosting stable:
each leaf summarises rather than accumulates its samples.

\paragraph{The raw EY gradient.}
The EY gradient with respect to $Z_{1,i,k}$ is (Eq.~\ref{eq:grad}):
\[
  g_i^{\mathrm{EY}} = \frac{4}{N-1}\bigl(-Z_{2c,i,k} + [\mathbf{V}Z_{1c}]_{i,k}\bigr).
\]
At PCA-unscaled initialisation, embedding columns have unit $\ell_2$ norm, so the
bracket term has magnitude $O(1/\sqrt{N})$ per element (dominated by $-Z_{2c,i,k}$).
The factor $\tfrac{4}{N-1}$ makes the full gradient $O(1/N\sqrt{N})$.

\paragraph{Choice 1 (correctness): Unit Hessians.}
We pass $h_i = 1$ for all samples.  The leaf formula becomes
\[
  w^*_\ell = -\frac{\sum_{i \in \mathcal{I}_\ell} g_i}{n_\ell + \lambda},
\]
i.e.\ the \emph{average} gradient in the leaf.  This is identical to the standard
gradient-boosted regression update (Hessian of $\tfrac{1}{2}(y-\hat{y})^2$ is also 1).

The alternative --- the true EY Hessian $h_i = \tfrac{4}{N-1}V_{kk}$ --- is $O(1/N^2)$
at PCA initialisation ($V_{kk}^{(0)} = 2/(N-1)$):
\[
  h_i^{(0)} = \frac{8}{(N-1)^2} \implies
  \text{denominator} = 94 \cdot \frac{8}{2999^2} + 1 \approx 0.00008 + 1 \approx 1.
\]
The leaf size term is negligible; $\lambda$ dominates.  The leaf value equals the
gradient \emph{sum} over $n_\ell$ samples rather than the average --- roughly
$n_\ell \approx 94\times$ too large --- and training diverges immediately.  This problem
does not improve as training proceeds: even at $V_{kk} = 1$ the true Hessian is
$4/(N-1) \approx 0.0013$, giving denominator $94 \times 0.0013 + 1 \approx 1.12$,
still $\lambda$-dominated.  Unit Hessians are therefore required for correctness.

\paragraph{Choice 2 (practical contribution): Gradient normalisation.}
With unit Hessians and the raw gradient (including the $\tfrac{4}{N-1}$ prefactor),
the leaf value is
\[
  w^*_\ell = -\frac{\sum g_i^{\mathrm{EY}}}{n_\ell + \lambda}
           = -\frac{4}{N-1} \cdot \mathrm{mean}(g^{\mathrm{bracket}}_\ell).
\]
For $N=3{,}000$ this is $\approx 0.0013 \times O(1/\sqrt{N}) \approx 0.000024$ per
round --- converging to the same solution, just $\sqrt{N} \approx 55\times$ more
slowly than necessary.  More importantly, the update scale depends on $N$: the same
learning rate gives very different effective step sizes on different dataset sizes.

We therefore drop the $\tfrac{4}{N-1}$ factor and normalise the bracket jointly across
both views:
\[
  \tilde{G}_v = \frac{-Z_{\bar{v}c} + \mathbf{V}Z_{vc}}
                     {\max\!\bigl(\sigma(G_1),\, \sigma(G_2),\, \epsilon\bigr)}
                \cdot 0.1.
\]
The result has standard deviation $\approx 0.1$ at every round regardless of $N$ or the
current state of $\mathbf{V}$.  Leaf updates are then $O(\eta \cdot 0.1 / n_\ell)$ at
any dataset size: the learning rate is uniformly interpretable, and no per-$N$
re-tuning is required.  This is a practical contribution, not a mathematical necessity.

\paragraph{Summary.}
\textbf{Unit Hessians} are required for correctness: the true EY Hessian causes the
$\lambda$ term to dominate the denominator, giving gradient sums rather than averages
and causing divergence.  \textbf{Gradient normalisation} is a practical contribution:
training without it converges to the same solution but $O(\sqrt{N})$ more slowly, and
the effective learning rate varies with $N$.  Together they implement the standard
regression-tree update rule --- gradient-averaged-over-leaf --- at consistent speed
across all dataset sizes.

Figure~\ref{fig:hessian} shows the diagonal $V_{kk}$ rising from $\approx 0$ toward
$\approx 1$ during training.  This reflects the growth of within-view embedding
variance as the EY objective is minimised and is shown as an optimisation diagnostic;
it does not enter the gradient or Hessian arrays passed to XGBoost.

\begin{figure}[H]
  \centering
  \includegraphics[width=0.85\linewidth]{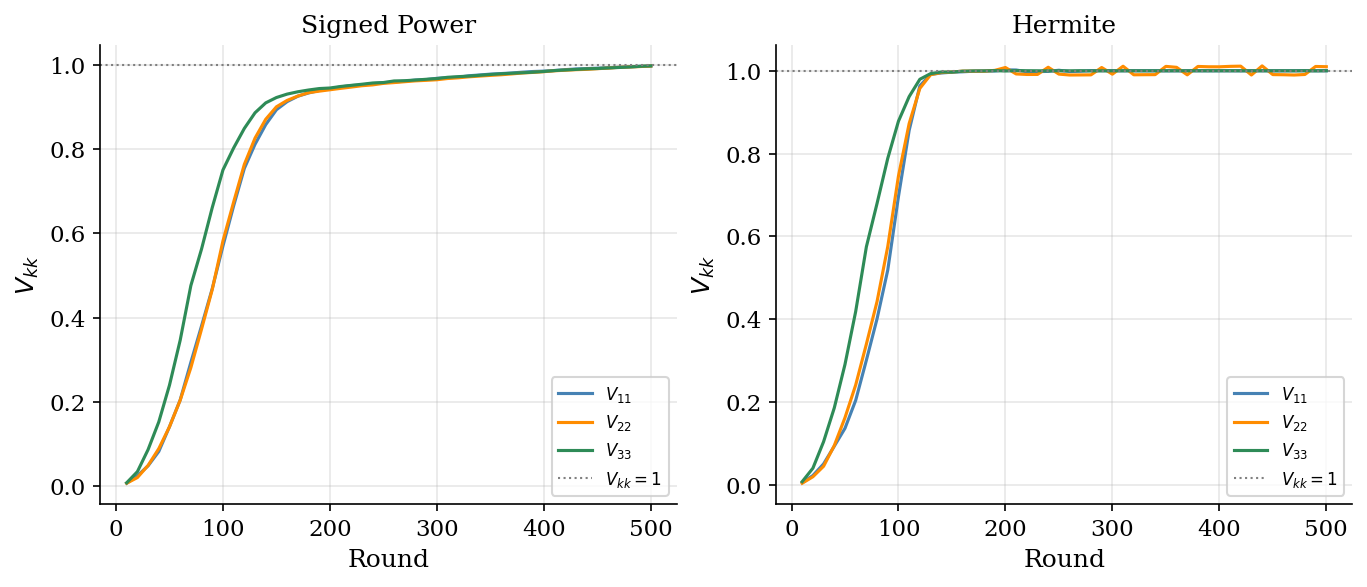}
  \caption{Per-component $V_{kk}$ over boosting rounds (seed 42).  $V_{kk}$ rises
           from $\approx 0$ at PCA-unscaled initialisation toward $\approx 1$ by round
           $\sim$200, reflecting growth of within-view embedding variance as training
           converges.  This is an optimisation diagnostic; it does not affect the
           gradient or Hessian arrays passed to XGBoost.}
  \label{fig:hessian}
\end{figure}

\section{Multi-Seed Stability Curves}
\label{app:multiseed}

\begin{figure}[H]
  \centering
  \includegraphics[width=0.6\linewidth]{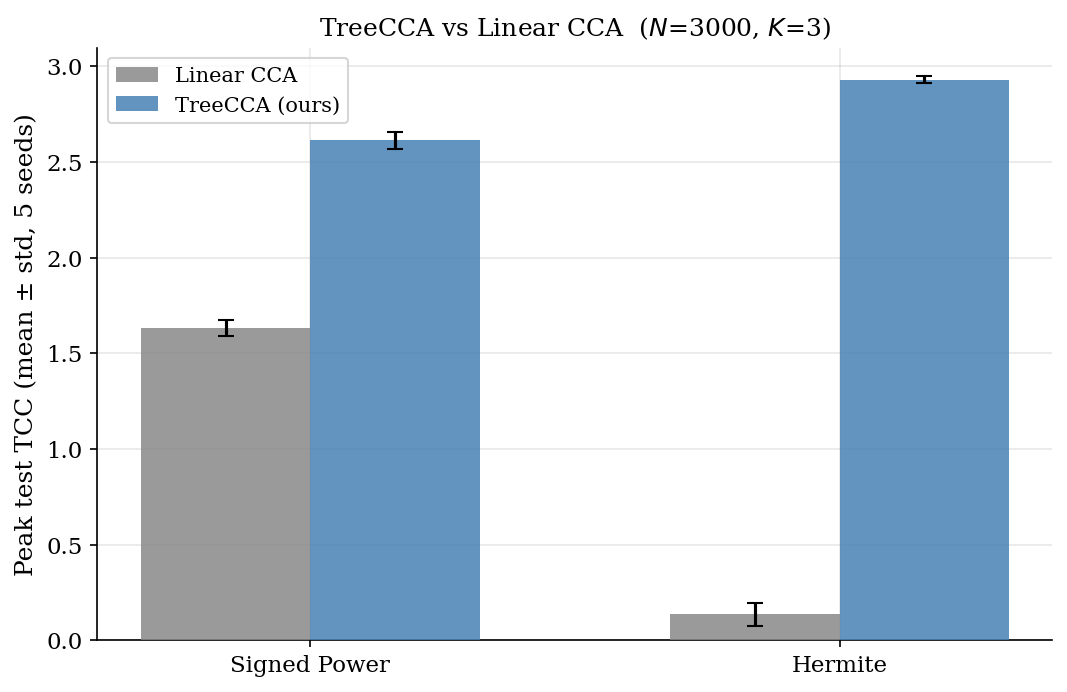}
  \caption{Peak test TCC (mean $\pm$ std, 5 seeds $\{42,0,1,2,3\}$) on both benchmarks.
           TreeCCA (blue) vs.\ linear CCA (grey).  Numerical summary in
           Table~\ref{tab:nonlinear}.}
  \label{fig:multiseed}
\end{figure}

\section{TreeMCCA: Experimental Results}
\label{app:multiview}

\begin{algorithm}[h]
\caption{TreeMCCA (multi-view extension of Algorithm~\ref{alg:treecca})}
\label{alg:treemcca}
\begin{algorithmic}[1]
\REQUIRE Views $X_1,\ldots,X_M \in \R^{N\times p_m}$;\;
         embedding dim $K$;\; rounds $T$;\; learning rate $\eta$
\ENSURE $K$ scalar boosters per view; embed via
        $\hat{Z}_m = [b_{m,1}(X_m),\ldots,b_{m,K}(X_m)]$
\STATE Initialise $MK$ scalar boosters $b_{m,k}$, base margin $\leftarrow$ $k$-th PCA direction of $X_m$
\STATE $Z_m \leftarrow \mathrm{PCA}_K(X_m)$ \hfill\COMMENT{initial embeddings, $m\in\{1,\ldots,M\}$}
\FOR{$t = 1, \ldots, T$}
  \STATE $\tilde{G}_m \leftarrow {-\sum_{j \neq m} Z_{jc}}
         + Z_{mc}\!\bigl[(M{-}1)V_{mm} + \textstyle\sum_{j \neq m} V_{jj}\bigr]$,
         normalised \hfill\COMMENT{all-pairs EY gradient from current embeddings, all $m$ simultaneously (Jacobi)}
  \STATE $b_{m,k} \mathrel{+}= \mathrm{tree}(X_m,\;\tilde{G}_{m,k},\;H{=}1)$
         \hfill\COMMENT{fit and add one new tree per booster; all $m$, $k$}
  \STATE $Z_{m,k} \leftarrow Z_{m,k} + \eta\;\Delta b_{m,k}^{(t)}(X_m)$
         \hfill\COMMENT{update cached embedding with new tree only; all $m$, $k$}
\ENDFOR
\RETURN $\{b_{m,k}\}_{m=1,k=1}^{M,K}$
\end{algorithmic}
\end{algorithm}

The only change from Algorithm~\ref{alg:treecca} is the gradient in line~5: the
all-pairs EY objective $\sum_{i<j}\mathcal{L}_{\mathrm{EY}}(Z_i,Z_j)$ yields a
gradient for view $m$ that aggregates attraction from all other views.
At $M=2$ this reduces exactly to Algorithm~\ref{alg:treecca}.

We construct a four-view dataset ($N=3000$, $K=3$) using four distinct nonlinear
transforms of the same latents: $v_0 = \sgn(z)|z|^{1/3}$, $v_1 = \sgn(z)|z|^3$,
$v_2 = z^2-1$ (Hermite $H_2$, even), $v_3 = z^3-3z$ (Hermite $H_3$, odd).
Hermite-orthogonal pairs ($v_0$--$v_2$, $v_2$--$v_3$) have near-zero linear
cross-correlation, making linear CCA progressively less effective as $M$ grows.

Figure~\ref{fig:multiview} reports average pairwise test TCC across all $\binom{M}{2}$
pairs for $M \in \{2,3,4\}$.  Linear CCA average collapses below $1.0$ at $M=4$;
TreeMCCA degrades gracefully.  Crucially, only the gradient computation changes across
$M$ --- no encoder architecture modification is needed.

\begin{figure}[H]
  \centering
  \includegraphics[width=0.75\linewidth]{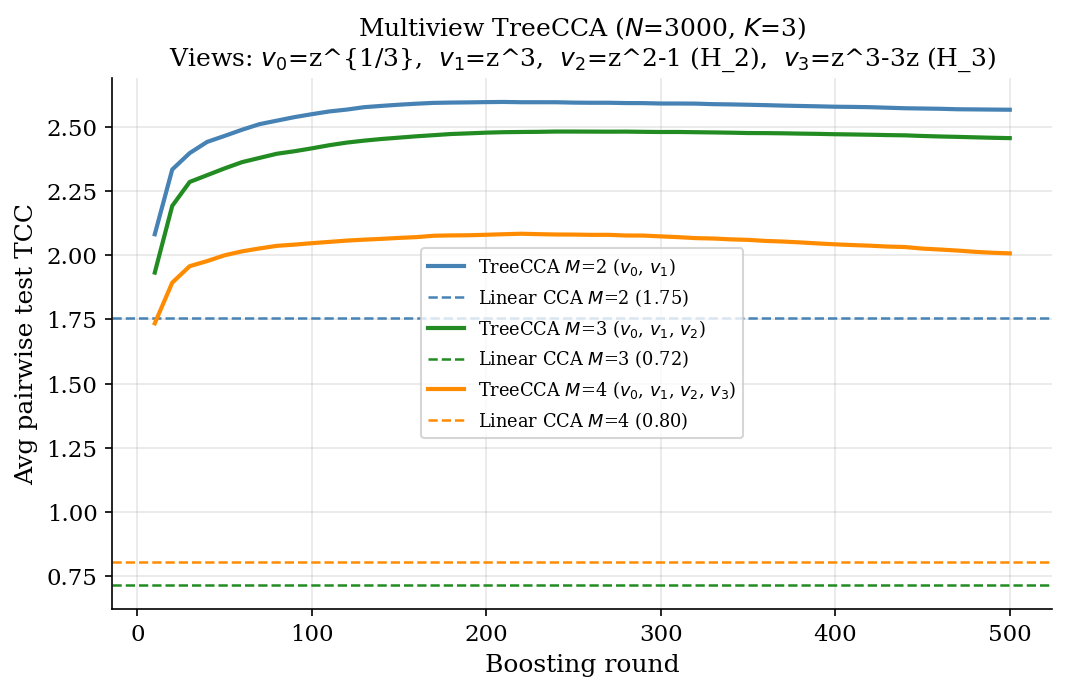}
  \caption{Average pairwise test TCC vs.\ boosting round for $M=2,3,4$ views.
           Dashed lines show linear CCA average pairwise baseline.
           As $M$ grows, linear CCA collapses (dashed) while TreeCCA maintains
           substantially higher per-pair TCC.}
  \label{fig:multiview}
\end{figure}

\section{TreeCCA-SSL: Full Experimental Details}
\label{app:ssl}

\begin{algorithm}[h]
\caption{TreeCCA-SSL (siamese single-encoder variant)}
\label{alg:treecca_ssl}
\begin{algorithmic}[1]
\REQUIRE Unlabelled data $X \in \R^{N\times p}$;\; augmentation $\mathcal{A}$;\;
         embedding dim $K$;\; rounds $T$;\; learning rate $\eta$
\ENSURE $K$ scalar boosters $\{b_k\}$; embed via $\hat{Z} = [b_1(X),\ldots,b_K(X)]$
\STATE Initialise $K$ scalar boosters $b_k$ with random orthogonal base margins
\STATE $Z \leftarrow$ random orthogonal init of shape $N\times K$
\FOR{$t = 1, \ldots, T$}
  \STATE Draw two independent augmented views: $V \sim \mathcal{A}(X)$,\; $V' \sim \mathcal{A}(X)$
  \STATE $Z_{k}  \leftarrow b_k^{(t)}(V)$,\quad
         $Z'_{k} \leftarrow b_k^{(t)}(V')$
         \hfill\COMMENT{full forward pass each round; augmented views change}
  \STATE $\tilde{G},\tilde{G}' \leftarrow \tfrac{4}{N-1}(-Z'_c + \mathbf{V}Z_c)$,\;
         $\tfrac{4}{N-1}(-Z_c + \mathbf{V}Z'_c)$, normalised
         \hfill\COMMENT{EY gradient, both directions}
  \STATE $b_k \mathrel{+}= \mathrm{tree}\!\bigl(V,\;\tfrac{1}{2}(\tilde{G}_k+\tilde{G}'_k),\;H{=}1\bigr)$
         \hfill\COMMENT{shared booster updated on symmetrised gradient}
\ENDFOR
\RETURN $\{b_k\}_{k=1}^K$ \hfill\COMMENT{apply to original $X$, not augmented views}
\end{algorithmic}
\end{algorithm}

The key differences from Algorithm~\ref{alg:treecca}: a \emph{single} set of $K$
boosters is shared across both views; augmented views are resampled each round so
the full forward pass is required (no incremental cache); and the gradient is
symmetrised over both augmentation directions.  At inference, the boosters are
applied to the original unaugmented features.

\subsection*{Data Generating Process}
\label{app:ssl_dgp}

\paragraph{Latent factors and class labels.}
$K_{\mathrm{lat}}=6$ orthonormal factors $Z_k \sim \mathcal{N}(0,1)$ are drawn and
orthogonalised.  Class $y = \arg\max_k |Z_k|$.

\paragraph{Sign-interaction features.}
Each of $D=60$ observed features encodes a product interaction:
\begin{equation}
  X_j = Z_{a_j} \cdot \sgn(Z_{b_j}), \quad a_j \neq b_j,
  \label{eq:dgp}
\end{equation}
normalised to unit standard deviation.  The magnitude $|X_j|=|Z_{a_j}|$ is
class-informative, but the sign depends on the unobserved reference factor $Z_{b_j}$.
Key properties:
\begin{itemize}
  \item \textbf{PCA-fatal}: $\mathrm{Cov}(X) = I_D$ --- all linear directions are
        equivalent.  No linear combination of $X$ correlates with any $|Z_k|$.
  \item \textbf{Tree-amenable}: $|X_j| = |Z_{a_j}|$ --- trees recover $|Z_{a_j}| > c$
        via the pair of splits $X_j > c$ and $X_j < -c$.
\end{itemize}

\paragraph{Augmentation.}
At each round, two independent views are drawn:
\begin{equation}
  V_j = X_j \cdot \mathrm{flip}_j + \sigma\varepsilon_j,\quad
  V_j' = X_j \cdot \mathrm{flip}_j' + \sigma\varepsilon_j',
  \label{eq:aug}
\end{equation}
where $\mathrm{flip} \overset{\mathrm{iid}}{\sim} \mathrm{Uniform}(\{-1,+1\})$ and
$\varepsilon, \varepsilon' \sim \mathcal{N}(0,1)$, all mutually independent, $\sigma = 0.2$.

\subsection*{Theoretical Analysis}

\begin{proposition}[Affine encoders cannot extract cross-view signal]
\label{prop:linear_fail}
Let $f(V) = VW + \mathbf{1}b^\top$ be any affine encoder (including PCA).  Under
augmentation Eq.~\eqref{eq:aug}, the expected cross-view covariance of any representation
is zero:
\[
  \E_{V,V'}\bigl[\mathrm{Cov}(f(V),\, f(V'))\bigr] = W^\top \E[V^\top V'] W = 0.
\]
No cross-view objective can learn from these augmented views using an affine encoder.
PCA also fails because $\mathrm{Cov}(X)=I_D$ makes all linear directions equivalent.
\end{proposition}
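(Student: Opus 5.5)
The plan is to compute the cross-view second moment $\E[V^\top V']$ entrywise and show it vanishes. Once that holds, the affine structure of $f$ carries the zero through, because the intercept $\mathbf{1}b^\top$ cancels under centring. Concretely, $f(V)-\overline{f(V)} = V_c W$, so
\[
\mathrm{Cov}(f(V),f(V')) = \tfrac{1}{N-1} W^\top V_c^\top V'_c W .
\]
Taking expectations over the augmentation randomness (with $X$ fixed, or jointly over the data as well) reduces the claim to $\E[V_c^\top V'_c]=0$. Since $W$ is deterministic, it factors out of the expectation.

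For the key computation, write $V_{ij} = X_{ij}\,\mathrm{flip}_{ij} + \sigma\varepsilon_{ij}$ and $V'_{il} = X_{il}\,\mathrm{flip}'_{il} + \sigma\varepsilon'_{il}$, as in Eq.~\eqref{eq:aug}. Any product $V_{ij}V'_{ml}$ expands into four terms. Each term contains at least one factor from the primed randomness ($\mathrm{flip}'$ or $\varepsilon'$) that is independent of everything else in that term and has mean zero. Hence $\E[V_{ij}V'_{ml}\mid X]=0$ for all index pairs, including $i=m$ and $j=l$.

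I would adopt the natural reading that flips are drawn per entry, or at least per feature, independently between the two views. If flips are per feature and shared across samples, the same argument still works: $\mathrm{flip}'_l$ has mean zero and is independent of all unprimed quantities. This gives $\E[V^\top V']=0$, and the centring terms behave the same way, since $\bar V'$ is also a mean-zero linear function of the primed randomness. So $\E[V_c^\top V'_c]=0$.

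The step I expect to need the most care is the interpretation of ``expected covariance'' together with the centring. The displayed identity in the statement writes the result as $W^\top\E[V^\top V']W$, dropping the centring and the $1/(N-1)$ factor. I would state explicitly that both the centred and uncentred versions are zero, by the same independence argument, so the stated form holds either way.

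For the conclusion about learning, I would note that the EY cross-view term $-2\tr(\C)$ therefore has zero expectation for every $(W,b)$. Gradients of any cross-view objective built from $\mathrm{Cov}(f(V),f(V'))$ thus carry no signal about $W$ in expectation, leaving only the within-view penalty. The PCA remark follows directly from the DGP property $\mathrm{Cov}(X)=I_D$ stated above. With an isotropic covariance every orthonormal $W$ is a maximiser of the PCA objective, so PCA selects no informative direction.
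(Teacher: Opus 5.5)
Your argument is correct and is essentially the paper's proof. Like the paper, you expand $\E[V_jV'_l]$ into four terms and note that each contains a mean-zero primed factor ($\mathrm{flip}'$ or $\varepsilon'$) independent of the rest. Your handling of the intercept, the centring and the $1/(N-1)$ factor is a careful addition that the paper's one-line proof skips.

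One claim goes too far: that \emph{any} objective built from $\mathrm{Cov}(f(V),f(V'))$ has uninformative expected gradients. This holds for objectives linear in the cross-covariance, such as the EY term $-2\tr(\C)$, but not automatically for nonlinear functionals such as squared correlations. The paper's corresponding sentence is equally informal, so this does not affect the main result.
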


\begin{proof}
For any features $j,l$:
\[
\E[V_j V_l'] = \underbrace{\E[X_j X_l]\E[\mathrm{flip}_j]\E[\mathrm{flip}_l']}_{=\,0}
+ \underbrace{\sigma^2\E[\varepsilon_j]\E[\varepsilon_l']}_{=\,0}
+ \text{cross terms} = 0,
\]
since $\E[\mathrm{flip}]=0$ and $\varepsilon, \varepsilon'$ are independent with mean
zero.  Hence $\E[V^\top V']=0$.
\end{proof}

\subsection*{Results}

TreeCCA-SSL is trained with $K_{\mathrm{emb}}=5$ embedding dimensions, $N=4000$,
1500 rounds, random orthogonal initialisation.  Downstream performance: linear probe
accuracy on the 6-way classification task (6 classes, random chance $= 0.167$).

The key intuition is that the class-informative signal lives in the \emph{magnitudes}
$|X_j| = |Z_{a_j}|$, not in the signs.  A linear encoder cannot recover this because
$\mathrm{Cov}(X)=I_D$: all linear directions are equivalent and the cross-view signal
cancels in expectation (Proposition~\ref{prop:linear_fail}).  Trees can recover it by
learning paired splits $\{X_j > c\} \cup \{X_j < -c\}$, which extract $|X_j|$
implicitly.  The oracles in Table~\ref{tab:ssl} bracket the achievable performance at
each level of privileged information:

\begin{itemize}
  \item \textbf{PCA$(|X|)$}: a cheating oracle that knows to take absolute values before
        applying PCA.  It cannot be matched by an unsupervised method that sees only $X$.
  \item \textbf{Oracle $|X|$}: a linear probe trained directly on the clean (noise-free)
        magnitude features $|X_j|$.  Stronger than PCA$(|X|)$ because it skips the PCA
        compression step.
  \item \textbf{Oracle $|Z|$}: a linear probe on the true latent factors $Z_k$
        themselves --- the theoretical ceiling, unreachable in practice.
\end{itemize}

\begin{figure}[H]
  \centering
  \includegraphics[width=\linewidth]{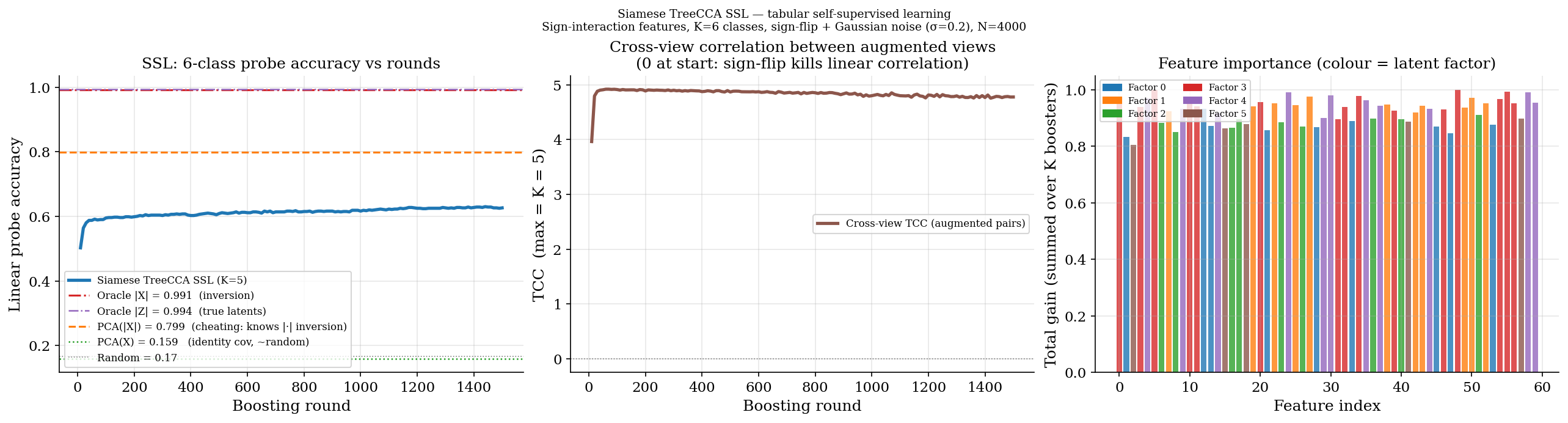}
  \caption{TreeCCA-SSL on the sign-interaction benchmark ($N=4000$, $K=6$ classes).
           \textbf{Left}: linear probe accuracy vs.\ boosting round.  TreeCCA-SSL
           (blue, 0.63) rises nearly $4\times$ above PCA$(X)$ (0.159, $\approx$ random).
           \textbf{Centre}: EY objective vs.\ round.
           \textbf{Right}: feature importance by latent factor.}
  \label{fig:ssl}
\end{figure}

\begin{table}[h]
  \centering
  \caption{Linear probe accuracy on sign-interaction SSL benchmark
           ($K_{\mathrm{lat}}=6$, $N=4000$, 5-dim embedding, seed 42).}
  \label{tab:ssl}
  \begin{tabular}{lcc}
    \toprule
    Method & Accuracy & Notes \\
    \midrule
    Random chance             & 0.167 & $1/K_{\mathrm{lat}}$, lower bound \\
    PCA$(X)$ + linear probe   & 0.159 & Provably at random chance (Prop.~\ref{prop:linear_fail}) \\
    \midrule
    \textbf{TreeCCA-SSL}      & \textbf{0.630} & No labels, no knowledge of sign structure \\
    \midrule
    PCA$(|X|)$ + linear probe & 0.799 & Oracle: knows to take $|\cdot|$ before PCA \\
    Oracle $|X|$              & 0.991 & Oracle: linear probe on clean magnitude features \\
    Oracle $|Z|$ true latents & 0.994 & Theoretical ceiling \\
    \bottomrule
  \end{tabular}
\end{table}

Proposition~\ref{prop:linear_fail} proves PCA is stuck at random chance; the empirical
result ($0.159 \approx 1/6$) confirms this.  TreeCCA-SSL reaches $0.630$
($3.8\times$ above random) with no labels and no knowledge of the sign structure ---
placing it well above the linear baseline and meaningfully below the oracle that cheats
by knowing $|\cdot|$ is the right preprocessing.  The remaining gap to $0.799$ reflects
augmentation noise ($\sigma=0.2$): the flip augmentation corrupts the sign information
that a cheating oracle exploits, so the achievable ceiling for an honest method is
lower than PCA$(|X|)$.

\section{Multi-View Embedding Visualisations}
\label{app:tsne}

Figure~\ref{fig:tsne} shows t-SNE projections~\citep{vandermaaten2008tsne} of the
test-set embeddings for each real-world dataset (seed 42, concatenated $M$-view
embeddings coloured by class label).  Linear CCA embeddings (top row) are often
poorly separated; TreeMCCA embeddings (bottom row) show substantially tighter
class clusters, consistent with the probe accuracy gains in Table~\ref{tab:realworld_mat}.

\begin{figure}[H]
  \centering
  \includegraphics[width=\linewidth]{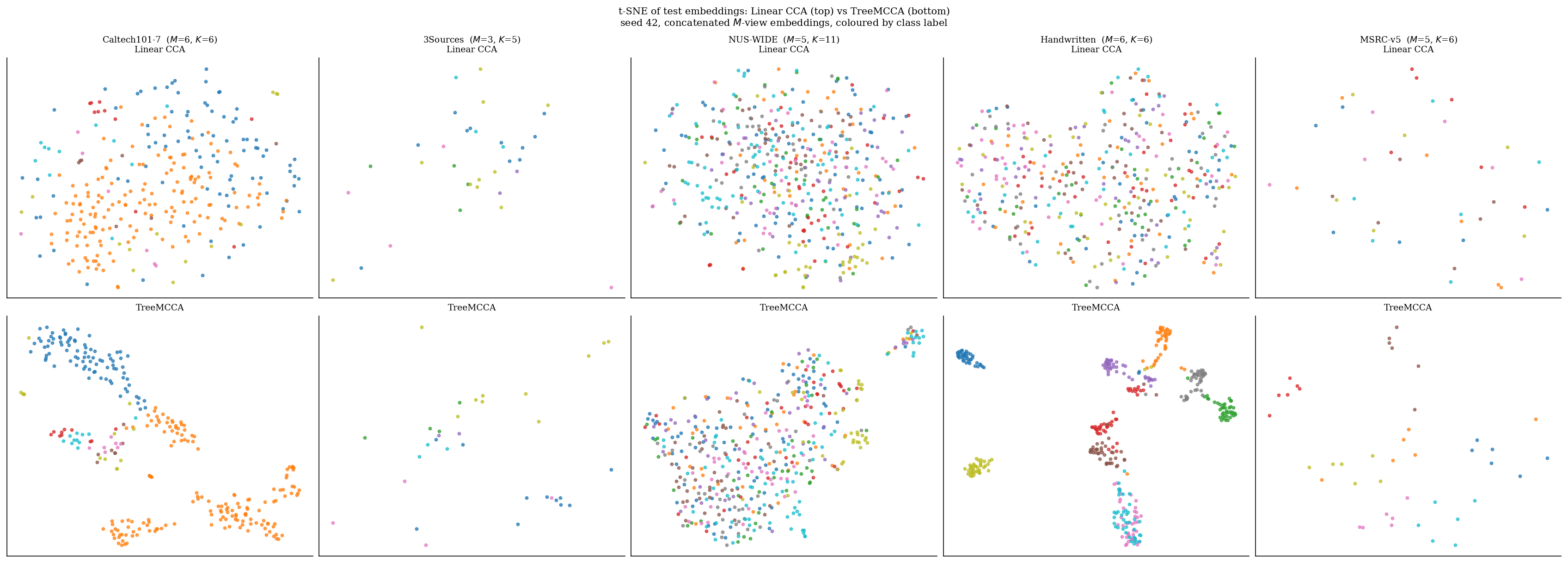}
  \caption{t-SNE of test embeddings.  \textbf{Top row}: Linear CCA
           (pairwise average projection).  \textbf{Bottom row}: TreeMCCA
           (all-views joint).  Each column is one dataset; points coloured by
           class label.  Seed 42, $K=C{-}1$ components.}
  \label{fig:tsne}
\end{figure}

\end{document}